\newtheorem{theorem}{Theorem}
\theoremstyle{definition}
\newtheorem{definition}{Definition}[section]
\newtheorem{problem}{Problem}
\newcommand{\avgcostproblem}{\textsc{a-dcfp}}
\newcommand{\completiontimeproblem}{\textsc{ct-dcfp}}
\newcommand{\partialavgcostproblem}{\textsc{o-dcfp}}
\newcommand{\algo}{\textsf{MIP-LNS}}
\newcommand{\algosim}{\textsf{MIP-LNS-SIM}}
\title{Simulation-Assisted Optimization for Large-Scale Evacuation Planning with Congestion-Dependent Delays}
\author{
Kazi Ashik Islam\and
Da Qi Chen\and
Madhav Marathe\and
Henning Mortveit\and 
Samarth Swarup\And
Anil Vullikanti
\affiliations
Biocomplexity Institute, University of Virginia
\emails
\{ki5hd, wny7gj, marathe, henning.mortveit, swarup, vsakumar\}@virginia.edu
}
\begin{document}

\maketitle

\begin{abstract}
% suggest the title to be A scalable mathematical programming based methodology for evacuation planning
Evacuation planning is a crucial part of disaster management.
% where the goal is to relocate people to safety and minimize casualties. 
However, joint optimization of its two essential components, routing and scheduling, with objectives such as minimizing average evacuation time or evacuation completion time, is a computationally hard problem.
% Contributions
To approach it, we present \algo{}, a scalable optimization method that utilizes heuristic search with mathematical optimization and can optimize a variety of objective functions. We also present the method \algosim{}, where we combine agent-based simulation with \algo{} to estimate delays due to congestion, as well as, find optimized plans considering such delays. 
% Exp result summary
We use Harris County in Houston, Texas, as our study area. We show that, within a given time limit, \algo{} finds better solutions than existing methods in terms of 
% average evacuation time, evacuation completion time, and optimality guarantee of the solutions ($13\%, 21\%, 58\%$ improvement respectively). 
three different metrics.
However, when congestion dependent delay is considered, \algosim{} outperforms \algo{} in multiple performance metrics.
% We then show that \algosim{} outperforms \algo{} in terms of average evacuation time, evacuation completion time, and average time spent on the road ($10\%, 17\%, 77\%$ improvement respectively) when delay due to congestion is considered. 
In addition, \algosim{} has a significantly lower percent error in estimated evacuation completion time compared to \algo{}.
% ; demonstrating the efficacy of \algosim{} in evacuation planning with consideration of congestion.

% We also perform experiments with \algosim{} to show its efficacy in estimating delays in the road network due to congestion by using an agent based model. Our results show that \algosim{} can find efficient evacuation plans, and at the same time provide an estimate of the evacuation completion time for the given plan with a small percent error.
\end{abstract}

\section{Introduction}
% Need to complete the introduction within the first page
% What
Evacuation plans are essential to ensure the safety of people living in areas that are prone to disasters such as floods, hurricanes, tsunamis and wildfires. Large-scale evacuations have been carried out during the past hurricane seasons in Florida, Texas, Louisiana, and Mississippi. Examples of hurricanes when such evacuations were carried out include, Katrina \& Rita (2005), Ike \& Gustav (2008), Irma \& Harvey (2017), Laura (2020), Ida (2021), and Ian (2022). 
%We are also anticipating that the 2022 hurricane season will have above-normal activity \cite{seasonal_hurricane_forecasting_2022}. 
For instance, about 2.5 million individuals were evacuated from the coastal areas of Texas~\cite{carpender2006urban} before the landfall of Hurricane Rita. The most recent category four hurricane, Ian, caused $119$ deaths in the state of Florida alone~\cite{hurricane_ian:2022}.
%For evacuations of such scale, it is essential to have an evacuation plan to ensure that people can evacuate in a safe and orderly manner.
To ensure people can evacuate in a safe and orderly manner, a good evacuation plan needs to have two essential components: ($i$) Evacuation Routes, i.e. what roads to take, and ($ii$) Evacuation Schedule i.e. when to leave.
%The goal is to find a plan that optimizes a desired objective such as average evacuation time, evacuation completion time.
%% Madhav Have a line at the end saying that researchers have studied the problems individually for along time and given refs

% Challenge
The focus of our paper is on {\em jointly optimizing the routes and schedules}. Informally, the idea is to find a schedule of when individuals can begin evacuation (within a given time window) and a route that would be used to evacuate, so as to minimize the objective functions capturing the system level evacuation time (see Section~\ref{sec:prelim} for formal definition of the problems). 
Jointly optimizing over the routes and schedule is significantly harder from a computational standpoint (See Section \ref{sec:hardness} for hardness results). Existing methods, even those designed to find bounded sub-optimal solutions, do not scale to city or county level planning problems. Thus, finding good evacuation routes and schedule within a reasonable amount of time, for a city or county with a large population, remains an open problem. 

Moreover, during evacuations, large number of people try to egress out of an area in a relatively small amount of time. This results in traffic congestion and huge delays in the evacuation process. It is crucial to consider and model such delays during the planning phase.
% see~\cite{van2013computational,even2015convergent,Hasan2017,Shahabi2018} for some of the important computational results in this area.
However, most of the existing works on finding optimized evacuation plans do not consider the slowdown of traffic caused by high traffic density. For instance, \cite{even2015convergent,romanski2016benders,hafiz2021large} all consider a constant travel time on each road, no matter how high (or low) the traffic density is on those roads. To overcome this, we treat the travel time on each road, as a parameter. We then utilize agent-based simulation, which is capable of modeling the complex relationship among the traffic density and speed on different roads, to learn the parameter values.

% Contributions
% \smallskip
% \noindent
\subsection*{Our Contributions}
\textbf{First}, we present \algo{}, a scalable optimization method that can find solutions to a class of evacuation planning problems, while optimizing for a variety of objectives (Section \ref{sec:optimization}). It is designed based on the Large Neighborhood Search (LNS) framework. In this paper, we work with \emph{three objectives}: minimizing ($i$) average evacuation time, ($ii$) evacuation completion time, and ($iii$) average evacuation time of `non-outlier' evacuees. We show that all of these three optimization problems are hard to approximate within a logarithmic factor. In \algo{}, we model the problems as Mixed Integer Programs (MIP) and then find solutions to these programs using a combination of heuristic search and mathematical optimization. A key technical challenge involves modeling \textit{flows over time}; this is best achieved using time expanded graphs. But it also leads to substantial increase in the size of the MIP and results in a significant increase in computing resources (time and space). This necessitates the need to combine heuristic search methods developed in the AI literature with MIP techniques developed in the OR literature.
%In the same section, we also show that even if the underlying graph is a subgraph of a grid, these problems remain {\sf NP-hard}. % We note in later sections how the methodology can be naturally adapted to other evacuation problems.  
%We model the planning problems as Mixed Integer Programs (MIPs). To scale the solution methodology to large problem sizes, we start with an initial feasible solution and then systematically search for better solutions in its neighborhood by solving reduced problems. %We also apply pruning strategies to reduce the problem size when possible.

% scale of the problem, use of real world datasets
% Simulation experiments, showing that using only the static graph does not provide good solutions
\textbf{Second}, we illustrate how our approach can scale to large problem sizes and can be applied to realistic real-world problems.  We choose Harris county in Houston, Texas as our study area and apply \algo{}. The county has about $1.5$ million households, spans an area of $1,778$ square miles, and has been affected by several hurricanes in the past. We use real-world road network data and a synthetic population data to construct a realistic problem instance. \emph{It is ten times larger (in terms of the size of the time expanded graph, and the number of evacuating vehicles) than the problem instance of our baseline~\cite{hafiz2021large}.} 
% In terms of size, our road network is at least five times larger than the networks considered in earlier papers. 
We show that, within a given time limit, \algo{} finds solutions for our problem instance that are on average $13\%$, $20.7\%$ and $58.43\%$ better than the baseline method 
% by Hasan and Van Hentenryck~\emph{et al.}~ \cite{hafiz2021large} 
in terms of average evacuation time, evacuation completion time and optimality guarantee of the solution, respectively (Section \ref{sec:algo_exec}).

\textbf{Finally}, we present \algosim{}, where we treat the travel time on each edge as a delay parameter and utilize agent-based simulation to learn the parameter values (Section \ref{sec:mip_lns_sim}). The MIP model, with the learned parameter values, is then solved to find optimized evacuation plans. Agent-based simulations provide a natural approach to capture the delays one incurs due to congestion -- \emph{dynamic flow problems cannot capture these delays.} 
% On the other hand, using only agent-based models does not allow us to formally capture the flow problems naturally -- decision of individual agents to take convergent flow paths is not easy to represent directly. Furthermore, using just agent-based models, it is not apriori clear how one can reason about the quality of the resulting solution. 
% Our approach assigns flow paths to agents in an optimal manner based on current parameter values. Then it uses agent-based model to update the parameter values. 
Our approach is an example of methods that combine simulation and optimization methods (SO)~\cite{gosavi2015simulation,amaran2016simulation} considered in OR and has become increasingly popular in AI~\cite{van2013computational,kambhampati2020challenges,doppa2021adaptive}.
Through our experiments, we show that \algosim{} outperforms \algo{} in terms of average evacuation time, evacuation completion time, and average time spent on the road ($10\%, 17\%, 77\%$ improvement respectively) when delay due to congestion is considered (Section~\ref{sec:simulation}).
In addition, \algosim{} has a significantly lower percent error ($6\%$) in estimated evacuation completion time compared to \algo{} ($76\%$), demonstrating the efficacy of \algosim{} in evacuation planning subject to congestion constraints.

\section{Related Work}
\label{sec:lit_review}
Researchers have approached the evacuation planning problem in different ways in the past.~\cite{hamacher2002} formulated it as a dynamic network flow optimization problem and introduced the idea of time expanded graphs to solve it using mathematical optimization methods. However, their method had prohibitively high computational cost, which paved the way to several heuristic methods~\cite{Lu:2005:CCR:2156226.2156249,Kim:2007:ERP:1341012.1341039,Shahabi2014}. These methods solve the routing problem only -- they either do not consider the scheduling problem at all or propose simple schemes such as letting evacuees leave at a constant rate. On the other hand,  \cite{even2015convergent,romanski2016benders,hafiz2021large} have considered the joint optimization problem of routing and scheduling. They formulated the problem as Mixed Integer Programs and used decomposition techniques~\cite{benders1962partitioning,Magnanti1981} to separate the route selection and scheduling process. However, none of these works consider the slowdown of traffic at high traffic densities. A review of existing works on evacuation planning can be found in the survey paper~\cite{Bayram2016}.

The use of convergent evacuation routes has been explored in the literature~\cite{even2015convergent,romanski2016benders,hafiz2021large}, where all evacuees coming to an intersection follow the same path afterwards. This is also known as confluent flow~\cite{Chen2006}. \cite{Golin2017NonapproximabilityAP} investigated the single-sink confluent quickest flow problem where the goal is minimizing the time required to send supplies from sources to a single sink. They showed that the problem is hard to approximate within a logarithm factor. We prove that all the planning problems considered in this paper are also hard to approximate.

We use the most recent method (Benders Convergent or BC) by \cite{hafiz2021large} as our baseline and show that \algo{} finds better solutions in terms of three different metrics. In addition, we provide direct MIP formulations for three different objectives, all of which can be optimized using \algo{} (as well as \algosim{}) without needing any modifications. 
% Finally, we use a problem instance that is \textit{ten times} larger than the baseline instance, to demonstrate the better scalability of our method.

Heuristic search methods are generally applied to problems that are computationally intractable. The goal is to find good solutions in a reasonable amount of time. The Large Neighborhood Search (LNS) framework \cite{Shaw1998} has been successfully applied to various hard combinatorial optimization problems in the literature \cite{Pisinger2018}. Recently, \cite{Li2021} applied the LNS framework to find solutions for the Multi-Agent Path Finding Problem where the goal is to find collision free paths for multiple agents. 

% \textbf{Update this paragraph with papers related to Simulation Assisted Optimization}
Simulation models have been used in the literature
% parametric optimization, where the goal is to find values of 
for finding optimal 
decision variables for a given objective function~\cite{dangelmaier2006simulation,Sajedinejad2011,Osorio2013,teufl2018optimised}. This is useful especially when the objective function's closed form is unknown or is too complex, but the function's value can be evaluated through (possibly time-expensive) simulation. In such cases, simulation models have been utilized as fitness functions within heuristic search and meta-heuristic algorithms~\cite{Sajedinejad2011,teufl2018optimised}. Existing research works have also proposed constructing a representative function, often termed as a \textit{metamodel}, of the actual objective function by using its calculated values from the simulation model \cite{Osorio2013,SoaresdoAmaral2022}. Parameters of the metamodel are learned by methods such as regression, 
% classic machine learning tools such as 
artificial neural networks~\cite{gosavi2015simulation}. Metamodels are mainly constructed because it is possible or easier to optimize these models. To fulfill our goal to find optimized evacuation plans considering congestion-dependent delays, we use our MIP model as a metamodel and use agent-based simulation to learn its parameter values. Both parameter learning and metamodel optimization are performed within \algosim{}.

\section{Problem Formulation}
\label{sec:prelim}
In this section, we introduce some preliminary terms that we use in our problem formulation. 
% Then we define three different objective functions for the evacuation planning problem. This allows us to formally define the optimization problems \avgcostproblem{}, \completiontimeproblem{}, and \partialavgcostproblem{}. Next, we describe how we construct time expanded graphs to model the flow of evacuees over time using a sample problem instance. Finally, we present a Mixed Integer Program (MIP) that represents a class of evacuation planning problems and show how we use it to formulate the above mentioned optimization problems. 

\begin{definition}
A \textit{road network} is a directed graph $\mathcal{G} = (\mathcal{N}, \mathcal{A})$ where every edge $e \in \mathcal{A}$ has ($i$) a capacity $c_e$, representing the number of vehicles that can enter the edge at a given time and ($ii$) a travel time $T_e$ representing the time it takes to traverse the edge. 
\end{definition}

\begin{definition}
Given a road network, a \textit{single dynamic flow} is a flow $f$ along a single path with timestamps $a_v$, representing the arrival time of the flow at vertex $v$ that obeys the travel times. In other words, $a_v-a_u \geq T_{uv}$ for edge $(u,v)$. A \textit{valid dynamic flow} is a collection of single dynamic flows where no edge at any point in time exceeds its edge capacity. 
\end{definition}

\begin{definition}
An \textit{evacuation network} is a road network that specifies $\mathcal{E}, \mathcal{S}, \mathcal{T} \subset \mathcal{N}$, representing a set of source, safe and transit nodes respectively. Furthermore, for each source node $k\in \mathcal{E}$, let $W(k)$ and $d_k$ represent the set of evacuees and the number of evacuees at source $k$ respectively. Let $W$ denote the set of all evacuees.
\end{definition}

For scheduling an evacuation, we observe that once an evacuee has left their home, it is difficult for them to pause until they reach their destination. We also assume that people from the same location evacuate to the same destination.
%\samarth{Better to simply state this as an assumption, omitting claims of reasonableness and/or confusion: We assume that people from the same location evacuate to same destination.}
Similarly, we assume that if two evacuation routes meet, they should both be directed to continue to the same location.

\begin{definition}
Given an evacuation network, we say a valid dynamic flow is an \textit{evacuation schedule} if the following are satisfied:
\begin{itemize}[leftmargin=*]
    \item all evacuees end up at some safe node, 
    \item no single dynamic flow has any intermediary wait-time (i.e. $a_v-a_u = T_{uv}$ and,
    \item the underlying flow (without considering time) is confluent, where if two single dynamic flows use the same vertex (possibly at different times), their underlying path afterwards is identical.
\end{itemize}
\end{definition}

Two natural objectives to minimize are the average evacuation time of the evacuees and the evacuation completion time.
%Optimizing for this objective ensures that the evacuation time of the evacuees, on average, is as small as possible. 
To define these formally, let $t_i$ denote the evacuation time of evacuee $i$. We then formally define the following problems:

\begin{problem}{Average Dynamic Confluent Flow Problem (\avgcostproblem{})}.
Given an evacuation network, let $T_{max}$ represent an upper bound on evacuation time. Find an evacuation schedule such that all evacuees arrive at some safe node before time $T_{max}$ while minimizing $\frac{1}{|W|}\sum_{i \in W} t_i$. 
\end{problem}

%We formally define two other planning problems.

%Minimizing Evacuation Completion Time: i.e. time when the last evacuee reaches safety, is another natural objective to optimize during evacuation planning. Formally:
\begin{problem}{Completion Time Dynamic Confluent FLow Problem (\completiontimeproblem{})}.
Given an evacuation network, find an evacuation schedule such that all evacuees arrive at some safe node while minimizing $\max_{i \in W} t_i$. 
\end{problem}

% \begin{problem}{Partial Average Dynamic Confluent Flow Problem (\partialavgcostproblem{})}. 
% Given an evacuation network, let $T_{max}$ represent an upper bound on evacuation time. $W^p = \{W_j \subseteq W \mid |W_j| = |W| \times p\% \}$. Find an evacuation schedule such that all evacuees arrive at some safe node before time $T_{max}$ while optimizing for the objective $\min_{W_j \in W^p} \frac{1}{|W_j|}\sum_{i \in W_j} t_i$.
% \end{problem}

% \textbf{Move formal definition of problem 3 to appendix.}
We define a third problem, Outlier Average Dynamic Confluent Flow Problem (\partialavgcostproblem{}), where the goal is to minimize average evacuation time of `non-outlier' evacuees. For brevity, its formal definition is provided in the supplementary materials~\cite{arXiv:2209.01535}.
% Minimizing the Average/Total Evacuation Time of $p$-fraction of the evacuees: In evacuation scenarios, some evacuees may be in such a situation that the cost of evacuating them may dramatically increase the overall evacuation objective, e.g. the Average Evacuation Time. We consider such evacuees as `outliers'. A common way to handle outliers is to optimize the desired objective for the `non-outlier' evacuees, while taking into consideration that the `outlier' evacuees will also evacuate and use the same road network. We formally define the problem as follows:
% \begin{problem}{Outlier Average Dynamic Confluent Flow Problem (\partialavgcostproblem{})}. 
% Given an evacuation network, let $T_{max}$ represent an upper bound on evacuation time. Given $p \in [0, 1]$, find an evacuation schedule $S$ such that all evacuees arrive at some safe node before time $T_{max}$ while minimizing $\frac{1}{|W_p|}\sum_{i \in W_p} t_i$ where $W_p$ is the set of $p-$fraction of the evacuees with the lowest evacuation time in schedule $S$.
% \end{problem}

\subsection{Time Expanded Graph for Capturing Flow Over Time}
Joint routing and scheduling over networks requires one to study \textit{flows over time}, as static flows make the unrealistic assumption that flows travel instantaneously (detailed discussion in the supplementary materials~\cite{arXiv:2209.01535}).
% i.e. where flow takes time to travel from one node to another 
% However, capturing the flow of evacuees over time using only static flows on the road network is challenging (detailed discussion in the supplementary materials).
% For instance, let us consider the sample evacuation network shown in Figure \ref{fig:sample_network}. All three edges in this network have a capacity of $1$, which means $1$ car can enter the link in a single timestep. However, sending flow from both sources ($0$ and $1$) at a rate of $1$ evacuee per timestep will not work because then two evacuees will reach node $2$ at the same timestep but only one evacuee will be able to enter edge $(2, A)$. The main issue here is that we need to keep track of available capacity on the edges at different points in time. We cannot do it using static flows because static flows have the underlying assumption that flows travel instantaneously.  
For this purpose, researchers have defined dynamic flows (\cite{skutella2009introduction,ford2015flows}) and used time expanded graphs to solve dynamic flow problems (\cite{romanski2016benders,hafiz2021large}). In this paper, we also use a time expanded graph (\textbf{TEG}) to capture the flow of evacuees over time.

\begin{figure}[!b]
    \centering
    \begin{subfigure}[b]{0.475\columnwidth}
        \includegraphics[width=\linewidth]{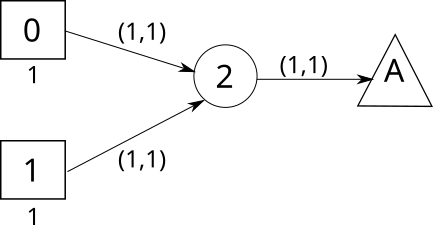}
        \caption{Sample Evacuation Network. Edges are labeled with travel time and flow capacity respectively. Source, safe and transit nodes are denoted by squares, triangles, and circles respectively. Source nodes are labeled with number of evacuees.}
        \label{fig:sample_network}
    \end{subfigure}
    \hfill
    \begin{subfigure}[b]{0.475\columnwidth}
        \includegraphics[width=\linewidth]{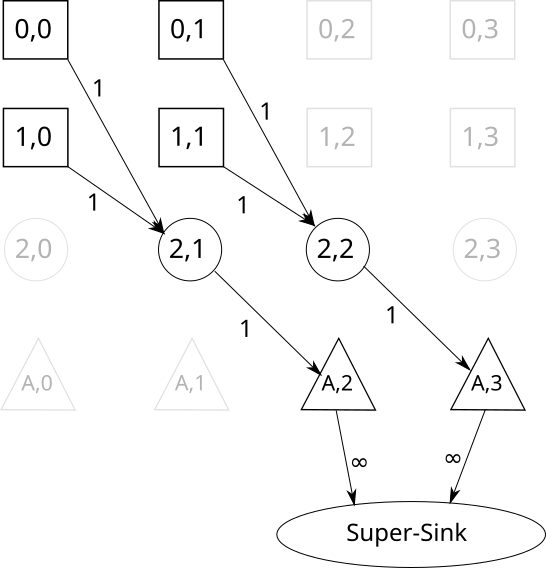}
        \caption{Time Expanded Graph (TEG) for the Sample Network. Edges are labeled with capacity. Construction of this TEG sets an upper bound of 3 time units for evacuation completion.}
        \label{fig:sample_teg}
    \end{subfigure}
    \caption{Sample Problem Instance}
    \vspace{-2mm}
\end{figure}

Time expanded graph is a directed graph denoted by $\mathcal{G}^x = (\mathcal{N}^x = \mathcal{E}^x \cup \mathcal{T}^x \cup \mathcal{S}^x, \mathcal{A}^x$). To construct it, we first fix a time horizon $\mathcal{H}$ and discretize the temporal domain into discrete timesteps of equal length. Then we create copies of each node at each timestep within~$\mathcal{H}$. After that, for each edge $e(u, v)$ in the road network, we create edges in the TEG as $e_t(u_t, v_{t+T_e})$ for each $t \leq \mathcal{H}-T_e$ where the edges $e_t$ have the same flow capacity as $e$. 
% For each source node~($u$), we add edges $e_t(u_t, u_{t+1}), \forall t \leq \mathcal{H}-1$ with infinite capacity because evacuees can choose to wait at these nodes. 
Finally, we add 
% a super source node $v_s$, which for each $u \in \mathcal{E}$ connects to the node $u_0$ and 
a super sink node~$v_t$ that connects to the nodes~$u_t$ for each~$u \in \mathcal{S}$ and each~$t \leq \mathcal{H}$. Edges to the super sink node are assigned an infinite amount of capacity.
Note that, when creating the time expanded graph, we are adding an additional dimension (i.e. time) to the road network. \emph{The size of the TEG is about $\mathcal{H}$ times as large as the road network in terms of number of nodes and edges.} -- yielding a
substantially larger problem representation. 
% For instance, the original Houston network with about 1330 nodes, a discretization step of 2 minutes and an evacuation horizon of 15 hours yields a time expanded graph with about 600,000 nodes ---  yielding a 400 times larger TEG. This time expanded representation is one important reason for the underlying computational space and time complexity of the problem and motivates the need for efficient heuristics.

A sample evacuation network and its corresponding TEG with time horizon~$\mathcal{H} = 3$ are shown in Figure (\ref{fig:sample_network}-\ref{fig:sample_teg}). The source, safe and transit nodes are denoted by squares, triangles, and circles respectively. In the TEG, there may be some nodes that are ($i$) not reachable from the source nodes, or ($ii$) no safe node can be reached from these nodes within the time horizon. These nodes are greyed out in Figure \ref{fig:sample_teg}.
An optimal solution of \avgcostproblem{} (and \completiontimeproblem{}) for this sample problem instance is to use the routes $0 \rightarrow 2 \rightarrow A$ from source node~$0$ and $1 \rightarrow 2 \rightarrow A$ from source node~$1$, where the evacuee at source node $0$ and $1$ leave at timestep~$1$ and~$0$ respectively.

%% Madhav: Now say here that your problem explodes when you do that and argue that is to some extent unavoidable if the expanded graph approach is used.

\subsection{Mixed Integer Program (MIP) Model}
\label{sec:mip_formulation}
Now, we present the Mixed Integer Program (\ref{obj:mip_objective}--\ref{constr:edge_assignment_binary}) that we use to represent a class of evacuation planning problems. 
We can have different objectives in the program (Objective \ref{obj:mip_objective}), each representing a certain planning problem. We use two types of variables: ($i$) Binary variable~$x_e, \forall e \in \mathcal{A}$, which will be equal to one if and only if the edge~$e$ is used for evacuation. Otherwise, it will be zero. ($ii$) Continuous variable~$\phi_{e_t}, \forall e_t \in \mathcal{A}^x$, which denotes the flow of evacuees on edge~$e$ at timestep~$t$. 

% The constraints of the model are explained in Table~\ref{tab:conv_model_expl}. 
Constraint (\ref{constr:single_outgoing_edge}) ensures that there is exactly one outgoing edge from each evacuation node. Constraint (\ref{constr:at_most_one_outgoing_edge}) ensures that at each transit node, there is at most one outgoing edge. Constraint (\ref{constr:total_population}) enforces that the total flow coming out of every evacuation node is equal to the number of evacuees at the corresponding node. Constraint (\ref{constr:flow_conservation}) ensures flow conservation in the time-expanded graph; here, $\delta^{-}(i)$ and $\delta^{+}(i)$ denote the set of incoming and outgoing edges to/from node~$i$, respectively. Constraint (\ref{constr:flow_capacity}) allows flow on chosen edges only; it also enforces flow capacity on each edge of the time-expanded graph. Constraint (\ref{constr:flow_continuous}) defines $\phi$ as continuous and non-negative variable; constraint (\ref{constr:edge_assignment_binary}) defines $x$ as binary variable. 
The constraint that evacuation completion time needs to be less than the given upper bound is implicit in the model, as we set the time horizon of the TEG to this upper bound.

\begin{align}
    & \text{Objective to Optimize}\label{obj:mip_objective}\\
    & s.t. \sum_{e \in \delta^+(k)} x_e = 1 & \forall k \in \mathcal{E} \label{constr:single_outgoing_edge}\\
    &\sum_{e \in \delta^+(i)} x_e \leq 1 & \forall i \in \mathcal{T} \label{constr:at_most_one_outgoing_edge}\\
    &\sum_{e \in \delta^+(k)}\sum_{t \leq \mathcal{H}} \phi_{e_t} = d_k & \forall k \in \mathcal{E} \label{constr:total_population}\\
    % &\phi_{e(v_s, k_0)} = d_k & \forall k \in \mathcal{E} \label{constr:fair_confluent_initial_flow}\\
    &\sum_{e \in \delta^-(i)} \phi_{e} = \sum_{e \in \delta^+(i)} \phi_{e} & \forall i \in \mathcal{T}^x \cup \mathcal{S}^x \label{constr:flow_conservation}\\
    & \phi_{e_t} \leq x_e  c_{e_t} & \forall e \in \mathcal{A}, t \leq \mathcal{H} \label{constr:flow_capacity}\\
    &\phi_{e} \geq 0 & \forall e \in \mathcal{A}^x \label{constr:flow_continuous}\\
    &x_e \in \{0, 1\} & \forall e \in \mathcal{A} \label{constr:edge_assignment_binary}
    % \vspace{-2mm}
\end{align}

\medskip

To solve \avgcostproblem{} using model (\ref{obj:mip_objective}--\ref{constr:edge_assignment_binary}), we represent the total evacuation time 
% over all evacuees 
using the variables $x$ and $\phi$ as follows:
\begin{align}
    \text{Total Evacuation Time} = \sum_{e \in \delta^-{(v_t)}} \phi_e t_s(e) \label{eq:total_evac_time_obj}
\end{align}

Here, $t_s(e)$ denotes the timestep of the starting node of edge $e$. Note that, minimizing the average evacuation time and the total evacuation time are equivalent.
% as the total number of evacuees is a constant. 
So, the \avgcostproblem{} objective would be: $\min_{x, \phi}\sum_{e \in \delta^-{(v_t)}} \phi_e t_s(e)$.

We have just provided details on how to formulate \avgcostproblem{} as a MIP. Details on \completiontimeproblem{} and \partialavgcostproblem{} are provided in the supplementary materials~\cite{arXiv:2209.01535}.
%Later in Section \ref{sec:problem_variants}, we present how \completiontimeproblem{} and \partialavgcostproblem{} are formulated as MIPs. Each of these three problems can be solved using our proposed algorithm \algo{}.
% \begin{enumerate}[leftmargin=*]
%     \item Minimizing the average/total evacuation time over all evacuees (i.e. the \avgcostproblem{} problem).
%     \item Minimizing the average/total evacuation time of $p\%$ (e.g. $90\%$) of the evacuees.
%     \item Minimizing the evacuation completion time.
%     \item Maximizing the flow of evacuees over the time horizon.
% \end{enumerate}

% We provide more details about each of the above mentioned objectives in Section \ref{sec:problem_variants}. 
\section{Inapproximability Results}
\label{sec:hardness}

In this section, we show that the problems we consider are not only {\sf NP-hard} but also hard to approximate. 
%Even when we consider special planar graphs that perhaps is closer to a city's road network where $G$ is a subgraph of a grid and all destinations are along the border, these problems remain {\sf NP-hard}.
A summary of the hardness results is found in Table \ref{tbl:hard}. 

% \begin{table}
% \begin{center}
% % \begin{tabular}{|c|c|c|c|c|}
% \begin{tabular}{|p{1.5cm}|p{1.35cm}|p{1.2cm}|p{1.35cm}|p{1.2cm}|}
%     \hline
%     Underlying & Hardness & \multicolumn{3}{c|}{Problems} \\ 
%     \cline{3-5}
%     Graph&  & \avgcostproblem & \completiontimeproblem & \partialavgcostproblem \\ 
%     \hline
%     Subgrid / Planar& {\sf NP-hard} &  Thm. \ref{thm:gridhard}& Thm. \ref{thm:gridhard} & Thm. \ref{thm:gridhard} \\ 
%     \hline
%     General with Two Sources / Sinks & {\sf $(3/2-\epsilon)$-hard} to approx. & Thm. \ref{hardconst} & See \cite{Golin2017NonapproximabilityAP}  & Thm. \ref{hardconst} \\ 
%     \hline
%     General & {\sf $O(\log n)$-hard} to approx. & Thm. \ref{hardlog} & See \cite{Golin2017NonapproximabilityAP}  & Thm. \ref{hardlog} \\ 
%     \hline
% \end{tabular}
% \caption{Summary of Hardness}
% \label{tbl:hard}
% \end{center}
% \vspace{-5mm}
% \end{table}

\begin{table}[!h]
\begin{center}
\begin{tabular}{|p{2.25cm}|p{1.25cm}|p{2cm}|p{1.25cm}|}
    \hline
    Hardness & \multicolumn{3}{c|}{Problems} \\ 
    \cline{2-4}
    & \avgcostproblem & \completiontimeproblem & \partialavgcostproblem \\ 
    \hline
    {\sf $O(\log n)$-hard} to approx. & Thm. \ref{hardlog} & See \cite{Golin2017NonapproximabilityAP}  & Thm. \ref{hardlog} \\ 
    \hline
\end{tabular}
\caption{Summary of Hardness}
\label{tbl:hard}
\end{center}
\vspace{-5mm}
\end{table}

% \iffalse

% \begin{theorem}
% Problems \avgcostproblem, \completiontimeproblem{} and \partialavgcostproblem{} are {\sf NP-hard} to solve. Furthermore, it is {\sf NP-hard} to approximate these problems to within a factor of $O(\log n)$ of the optimal solution.
% \end{theorem}

% \fi
%Realistically, the graph obtained from city road networks are usually closer to planar graphs and the evacuation destinations are typically along the outer boundary. However, even in this special case where the graph is a subgraph of a grid (hence planar), with destinations along the outer border,the problems remain NP-hard. 

% \begin{theorem}
% \label{thm:gridhard}
% Problems \avgcostproblem{}, \completiontimeproblem{} and \partialavgcostproblem{} are {\sf NP-hard} even If $G$ is a subgraph of a grid and all safety destinations are along the outer boundary. 
% \end{theorem}

%% Madhav: In the paper expand on this (atleast for the longer version) See if we can talk about reesults for near planar instances as well given that such graphs might be near planar. 
%% What do we know about trees BTW?
% Da Qi: For trees, the routes are set so it should be easy to find the schedule. 

% \begin{theorem}
% \label{hardconst}
% For any $\epsilon > 0$, it is {\sf NP-hard} to approximate \avgcostproblem{} and \partialavgcostproblem{} to a factor of ($3/2-\epsilon$) of the optimum, even when there are only two sources and one safe node. 
% \end{theorem}

\begin{theorem}
\label{hardlog}
For \avgcostproblem{} and \partialavgcostproblem{} with many sources and one safe node, it is {\sf NP-hard} to approximate within a factor of $O(\log n)$. 
\end{theorem}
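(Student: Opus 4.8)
\noindent\textbf{Proof plan for Theorem~\ref{hardlog}.}
The plan is to reduce from \textsc{Set Cover}, exploiting the fact (Feige; Dinur and Steurer) that it is {\sf NP-hard} to approximate the minimum cover size to within $(1-\epsilon)\ln N$ on a ground set of size $N$; equivalently, it is {\sf NP-hard} to distinguish instances admitting a cover of size $k$ from instances in which every cover has size at least $(1-\epsilon)k\ln N$, and this holds with $k$ and the number of sets polynomially bounded in $N$. Given such an instance with ground set $U=\{u_1,\dots,u_N\}$ and sets $S_1,\dots,S_m$, I would build an evacuation network with a single safe node $z$; one transit node $\sigma_j$ per set $S_j$; one source node per element $u_i$ (carrying a suitable unit of demand, with horizon bound $T_{max}$ set to some polynomial value large enough to make yes-instances feasible); an arc $u_i\to\sigma_j$ whenever $u_i\in S_j$; and a ``funnel'' gadget $\Gamma$ connecting the $\sigma_j$'s to $z$.

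The two structural constraints of the model do the combinatorial work. By constraint (\ref{constr:single_outgoing_edge}), each element source must commit \emph{all} of its flow to a single set containing it, so the set nodes that carry any flow form a valid cover of $U$; by constraint (\ref{constr:at_most_one_outgoing_edge}) the flow downstream of any node is confluent, so $\Gamma$ genuinely constrains how the ``opened'' set nodes route to $z$. The gadget $\Gamma$ is engineered with capacities, travel times, and (where convenient) auxiliary mandatory evacuees hung off the $\sigma_j$'s so that the total evacuation time is, up to a lower-order additive term, an increasing affine function $A\cdot\#\{\text{set nodes carrying flow}\}+o(A)$ of the number of opened sets; I would take the amplification factor $A$ (say a large power of $N$) big enough that the unavoidable baseline cost of physically moving the $N$ units of element flow to $z$ is negligible next to $A$.

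Granting such a $\Gamma$, an optimal evacuation schedule is forced to open as few set nodes as possible, so the optimum of \avgcostproblem{} equals (after dividing by $|W|$ and up to the lower-order slack) $A$ times the minimum set-cover size. The yes/no gap of the \textsc{Set Cover} instance then transfers to a gap of $(1-o(1))\ln N=\Omega(\log n)$ for \avgcostproblem{}, where $n$ is the (polynomial) number of nodes of the constructed network, and since the horizon is polynomial the time-expanded graph stays polynomial as well. The same instance also handles \partialavgcostproblem{}: taking $p=1$ gives $W_p=W$ and recovers \avgcostproblem{} verbatim, while for any fixed $p$ bounded away from $0$ one checks that discarding the slowest $(1-p)$-fraction cannot bypass the cover bottleneck, because in the construction the cover-dependent cost is spread so that a constant fraction of evacuees still pays for every additional opened set.

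The crux — and the step I expect to be the main obstacle — is the design and analysis of $\Gamma$. Confluence naturally \emph{rewards} spreading flow over many set nodes (to relieve congestion), which is exactly the opposite of what \textsc{Set Cover} requires, so $\Gamma$ must instead make opening one extra set node strictly and robustly costly: the penalty has to be triggered by \emph{any} positive flow through $\sigma_j$ (here one leans on the fact that every source injects a full unit into its chosen set), it must be essentially independent of how much flow passes through $\sigma_j$, and it must dominate the baseline routing cost — all while keeping the horizon and hence the whole reduction of polynomial size. I expect this construction to parallel, and build on, the single-sink confluent completion-time hardness of Golin~\emph{et al.}~\cite{golin2017nonapproximability}, adapted so that the per-set penalty is charged against the \emph{sum} of arrival times rather than the maximum.
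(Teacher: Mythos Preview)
Your plan takes a genuinely different route from the paper. The paper does \emph{not} reduce from \textsc{Set Cover}; it reuses the recursive amplification of the Two Disjoint Path Problem that Golin~\emph{et al.}\ \cite{golin2017nonapproximability} built for the completion-time objective. In that construction a NO instance of the underlying disjoint-paths problem forces a cut of capacity at most $2$ separating all $N$ sources from the sink, while the instance carries $\Theta(M^2\log N)$ evacuees. That single bottleneck already lower-bounds the \emph{sum} of arrival times by a quadratic-in-demand term, and comparing YES and NO cases gives the $\Theta(\log N)$ gap for the average exactly as in the proof of Theorem~\ref{hardconst}, only on the amplified graph.

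Your proposal has a real gap, and you have located it yourself: the gadget $\Gamma$. In a dynamic confluent flow instance every cost term is an integral of flow over time; nothing in the model charges a \emph{fixed} penalty the instant some $\sigma_j$ carries positive flow, independent of the amount. The auxiliary evacuees you suggest hanging off $\sigma_j$ do not help, since they are present and must be routed regardless of whether any element picks $\sigma_j$, so they cannot distinguish ``opened'' from ``unopened'' sets. Worse, confluence pushes the optimum toward opening \emph{more} set nodes, because each one is an extra parallel channel into the funnel. To make fewer sets cheaper, $\Gamma$ would have to offer only a small number of ``good'' confluent paths to $z$ and force every additional opened $\sigma_j$ onto a much longer one; but ensuring that two $\sigma_j$'s cannot share a good path is exactly a disjoint-paths constraint, at which point you are rebuilding the Golin~\emph{et al.}\ gadget rather than doing a \textsc{Set Cover} reduction. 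Absent a concrete $\Gamma$ with this behavior --- and I do not see a natural one --- the plan does not go through; the paper's bottleneck-cut argument sidesteps the issue entirely by never needing a per-set fixed cost.
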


The proof of Theorem \ref{hardlog} is provided in the supplementary materials~\cite{arXiv:2209.01535}. \emph{In addition, we show that all the three problems remain {\sf NP-hard} even when we consider the road network $G$ to be
a sub-graph of a grid and all destinations are along the border.} Street networks in several city neighborhoods resemble such networks.
 
\section{Heuristic Optimization}
\label{sec:optimization}
% In this section, first (section \ref{sec:optimizing_model}) we describe our process of solving the model (\ref{ob:pah_rel_conv_obj}-\ref{constr:path_rel_conv_edge_assignment_binary}). After that (section \ref{sec:optimizing_social_utility}), we present our methodology for computing an empirical social utility and how we optimize for it. 

% \subsection{Optimizing Model (\ref{ob:pah_rel_conv_obj}-\ref{constr:path_rel_conv_edge_assignment_binary})}
\label{sec:optimizing_model}
% As shown in Section \ref{sec:hardness}, solving \avgcostproblem{}, \completiontimeproblem{}, and \partialavgcostproblem{} is computationally hard. For this reason, we present the scalable method \algo{} where we use MIP solvers in conjunction with heuristic search.
In this section, we present the method \algo{} where we use MIP solvers in conjunction with heuristic search.

\begin{algorithm}[!b]
    \caption{\algo{} Method}
    \label{alg:iterative_mip_heuristic}
    % \SetKwRepeat{Do}{do}{while}
    \KwIn{Initial solution: \textit{sol}, Time Expanded Graph: $TEG$, Time horizon: $T$, Model to optimize: $model$, (\%) of routes to update: $p$, Number of Iterations: $n$, Positive number: $p_{inc}$}
    \KwOut{Solution of $model$}
    \For {1 to n \label{algline:iterations}}{
        Select $(100-p)\%$ of the source locations uniformly at random. Let their set be $S$. \label{algline:random_selection}\\
        Fix the routes from the source locations in $S$. Set $x_e=1$ if $e$ is on any of the routes from $S$ in \textit{sol}.\\
        \textit{sol} $\gets$ Solution of $model$ from a MIP solver \label{algline:gurobi_call}\\
        $T^{\prime}$ $\gets$ evacuation completion time for solution \textit{sol}\\
        \If{$T - T^{\prime} > +threshold$}{
            Update the $model$ by setting the time horizon to $T^{\prime}$.
            Prune $TEG$ and $model$ by removing: \label{algline:prune_start}\\ 
            ($i$) nodes that are unreachable from the evacuation nodes within time horizon $T^{\prime}$, and  \\
            ($ii$) nodes from which none of the safe nodes can be reached within time horizon $T^{\prime}$ \label{algline:prune_end}
        }
        $p \gets p + p_{inc}$\\
    }
    \Return \textit{sol}
\end{algorithm}

Within \algo{}, we first calculate an initial feasible solution in two steps: ($i$) calculating an initial convergent route set, and ($ii$) calculating the schedule that minimizes the target objective using the initial route set. For $(i)$, we take a shortest path from each source to its nearest safe node by road. This is done using Algorithm 3 (in the supplementary materials~\cite{arXiv:2209.01535}) to make sure the route set is convergent. For $(ii)$, we use the just calculated route set to fix the binary variables $x_e$ in model (\ref{obj:mip_objective}-\ref{constr:edge_assignment_binary}). This gives us a linear program that can be solved optimally to get the schedule. % Note that, we can do a binary search over the time horizon to find the minimum clearance time that can be achieved using the initial route set. %This keeps the problem size as small as possible.

Next, we search for better solutions in the neighborhood of the solution at hand (Algorithm \ref{alg:iterative_mip_heuristic}). Here, we run $n$ iterations. In each iteration, we select $q=(100-p)\%$ of sources uniformly at random and keep their routes fixed. This reduces the size of the MIP as we have fixed values for a subset of the variables. We then optimize the `reduced' MIP model using a MIP solver~\cite{gurobi}. Essentially, we are searching for a better solution in the neighborhood where the selected $q\%$ routes are already decided. Any solution found in the process will also be a feasible solution for the original problem. If we find a better solution with an evacuation completion time $T^{\prime}$ that is less than the current time horizon ($T$), then we also update the model by setting the time horizon to $T^{\prime}$. When resetting the time horizon, we prune the TEG and the MIP model (lines \ref{algline:prune_start}--\ref{algline:prune_end}). This reduces the number of variables in the MIP model and simplifies the constraints. At the end of each iteration, we increase the value of $p$ by $p_{inc}$ amount. Note that, when $p=100$, we will be solving the original optimization problem. In our experiments, we start with $p=75$ and set $p_{inc} = 0.5$.

When solving the reduced problem in each iteration (line \ref{algline:gurobi_call}), we use ($i$) a time limit, and ($ii$) a parameter $threshold\_gap$ to decide when to stop. MIP solvers keep track of an upper bound ($Z_U$) (provided by the current best solution) and a lower bound ($Z_L$) (obtained by solving relaxed LP problems) of the objective value.
%The gap between the two bounds is defined as $gap = (Z_U - Z_L) / Z_U$. 
We stop the optimization when the relative gap $(Z_U - Z_L) / Z_U$ becomes smaller than the $threshold\_gap$. In our experiments, we set this to $5\%$.  In total, \algo{} has four parameters: $n$, $p$, $p_{inc}$, and $threshold\_gap$. 

\section{Simulation-Assisted Model for Optimization (\algosim{})}
\label{sec:mip_lns_sim}
In our formulation (Section \ref{sec:prelim}), we assume the travel time on each edge to be a constant. 
% This means, as long as there is capacity available on an edge $e$, vehicles can enter $e$ and they will traverse $e$ in a constant amount of time. A naive estimate of this travel time is, the time it takes to traverse the edge at speed limit. 
However, in practice, travel time on a road is affected by the number of vehicles on it (i.e. traffic density). Moreover, travel time on an edge also affects how many cars can enter it in a given amount of time (i.e. the flow capacity) \cite[Chapter\ 5]{mannering2020principles}. 
%For instance, a higher travel time implies lower speed which then implies lower flow capacity.
We, therefore, treat the edge travel times as `parameters' and aim to learn suitable values of these parameters to realistically model congestion-dependent delays.

% Existing research works have used traffic models to predict travel time on edges based on the number of vehicles \cite{wang2011logistic,mannering2020principles}. 
To estimate the parameters, we use the agent-based queuing network simulation system QueST~\cite{Islam2020} with the logistic traffic model. The simulator is able to model the complex relationship between traffic density and effective speed of vehicles on the road. 
Given the routes and schedule, we simulate the evacuation process using QueST and determine the average travel time on each edge used during evacuation. This provides us a reasonable estimate of travel time on the edges when certain routes and schedule are used. However, simulating the evacuation of the entire population is a time consuming task. Therefore, we only simulate the evacuation of a certain percentage $(p_e)$ of the evacuees at each source. To be
more precise, we simulate the departure of the first $p_e\%$ of the evacuees from each source, following the evacuation schedule. 
% If we wanted to simulate the evacuation of more evacuees, they would leave after the first pe%. 
Our intuition is: congestion faced by the first $p_e\%$ of evacuees provides us a good estimation of the overall congestion faced by all evacuees throughout the entire evacuation. This is because
people who leave first should not overlap too much with people who leave much later.

% Our intuition is, the general congestion situation during an actual evacuation can be estimated by simulating the evacuation of a portion of the evacuees.

\begin{algorithm}[t]
    \caption{\algosim{} Method}
    \label{alg:mip_lns_sim}
    \KwIn{Evacuation network: $\mathcal{G}$, Initial solution: \textit{sol}, Number of iterations: $m$, Percentage of Evacuees to simulate: $p_e$}
    \KwOut{Evacuation routes and schedule.}
    \For{each edge $e \in \mathcal{A}$}{ \label{algline:max_speed_start}
        $T_e \gets$ Time it takes to traverse $e$ at speed limit.\\
        $c_e \gets$ Updated flow capacity of $e$.\label{algline:max_speed_end}
    }
    \For {$1$ to $m$}{
        $TEG \gets$ Time expanded graph of $\mathcal{G}$ with current travel time and capacity values of the edges.\\
        $model \gets$ MIP model (\ref{obj:mip_objective}--\ref{constr:edge_assignment_binary}) from $\mathcal{G}$ and $TEG$.\\
        $sol \gets$ Solution of $model$ from \algo{}. \label{algline:sol}\\
        Simulate evacuation of first $p_e \%$ evacuees at each source with routes and schedule from $sol$.\\
        \For{each edge $e \in \mathcal{A}$ used in $sol$}{
            $T_e \gets$ Avg. travel time on $e$ from simulation. \label{algline:parameter_estimation}\\
            $c_e \gets$ Updated flow capacity of $e$.
        }
    }
    \Return \textit{sol}
    % \vspace{-5mm}
\end{algorithm}

Based on the above idea, we present the method \algosim{} (Algorithm \ref{alg:mip_lns_sim}). Initially, we assume that vehicles travel on each edge at the maximum speed allowed and calculate the travel time (i.e. the parameters) and flow capacity accordingly (line \ref{algline:max_speed_start}--\ref{algline:max_speed_end}). We then create the time-expanded graph based on these values and construct the MIP (i.e. our \textit{metamodel}). Next, We solve the MIP using \algo{}. We use the routes and schedule given by the solution to simulate the evacuation of first $p_e\%$ of the evacuees at each source. From the simulation results, we calculate the average travel time on each edge used in the solution and update the travel time as well as the flow capacity of these edges (details in supplementary materials~\cite{arXiv:2209.01535}). We do this iteratively for $m$ times. In our experiments, we have used $p_e=5, 10$ and $m=10$. Note that, both parameter estimation (line \ref{algline:parameter_estimation}) and the metamodel optimization (line \ref{algline:sol}) are performed within \algosim{}.

% Note that, \algosim{} has some similarities with how neural networks are used in supervised machine learning tasks. In supervised machine learning, we have a training dataset that is used to determine the parameter values of the neural network. The parameters are updated (in multiple iterations) by minimizing a loss function, which is typically done by gradient based methods. In \algosim{}, the routes and schedule (i.e. \textit{sol} in line \ref{algline:sol}) we come up with in each iteration can be seen as training data. Performing the partial simulation is equivalent to minimizing loss function and doing gradient descent. However, one important distinction of \algosim{} from  supervised machine learning is that we do not have a training dataset as input. Rather, we generate the training data as a part of our algorithm.
\section{Experiments}
\label{sec:experiments}
In this section, we present details of our problem instance and our experiment results.
% (Section \ref{sec:problem_instance}). 
% Then, we present our experiment results with \algo{} and compare the solutions found with the baseline method (Section \ref{sec:algo_exec}). 
% % Section (\ref{sec:simulation}) contains simulation results showing the efficacy and robustness of our solution. 
% Section (\ref{sec:simulation}) contains our experiment results with \algosim{} demonstrating how we use the QueST simulator effectively in our optimization process.
% Finally, we present the characteristics of the solutions found by solving \completiontimeproblem{} and \partialavgcostproblem{} in Section (\ref{sec:variant_results}). 
% Experiment results showing the scalability of \algo{} are provided in Section (\ref{sec:scalability}).
% Finally, Section (\ref{sec:risk_modeling}) examines the effect of road failure on the performance of evacuation plans.

\subsection{Problem Instance}
\label{sec:problem_instance}
We use Harris County in Houston, Texas as our study area. 
% It is situated on the Gulf of Mexico and is prone to hurricanes every year during hurricane season. 
We have used data from HERE maps~\cite{HERE} to construct its road network. The network contains roads of five (1 to 5) different function classes, which correspond to different types of roads. For instance, function class 1 roads are major highways, and function class 5 roads are residential roads. 
% The road network has a hierarchical structure where lower and higher level roads are connected through entrance and exit ramps.

For our experiments, we consider the nodes which connect and lead from function class 3/4 roads to function class 1/2 roads as the start/source locations of the evacuees. We then consider the problem of ($i$) when should evacuees target to enter the function class 1/2 roads and ($ii$) how to route them through the function class 1/2 roads to safely. As safe locations, we selected six locations at the periphery of Harris County which are on major roads. A visualization of the dataset is presented in Figure \ref{fig:harris_county}. Additional details regarding the problem instance are provided in Table \ref{tab:harris_county}.
%% Madhav say the size of the TE graph but point out here as well. You need to remind the reader of the TE graph size as one to consider
% Our final road network contains 1338 nodes and 1751 edges. We use a time unit of two minutes, to discretize the temporal domain, and a time horizon of fifteen hours to perform the evacuation. 
% In this setting, the time expanded graph contains $\sim 600,000$ nodes and $\sim 791,000$ edges. The number of variables in the corresponding MIP would then be 1751 binary variables, and 1,005,059 continuous variables. 

\begin{figure}[!t]
    \centering
    \includegraphics[width=0.45\textwidth]{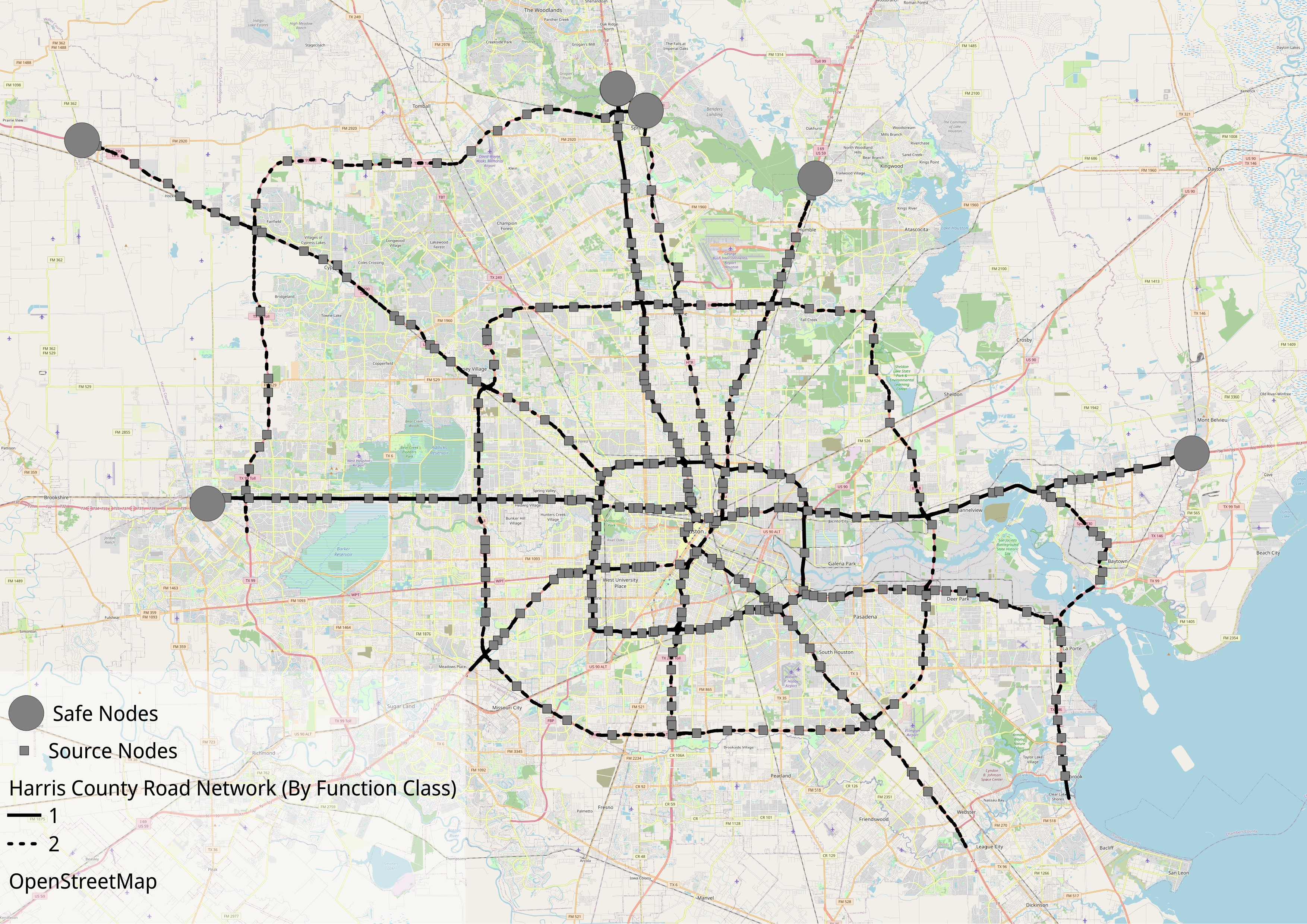}
    \caption{Harris County Problem Instance}
    \label{fig:harris_county}
    \vspace{-1mm}
\end{figure}

%% Madhav Say how big the time expanded graph is
\begin{table}[!h]
    \small
    \centering
    \begin{tabular}{p{5.25cm}p{2.25cm}}
    \toprule
        \# of nodes, edges in the road network & $1338, 1751$\\
        \midrule
        \# of (evacuee) source locations & $374$\\
        \midrule
        \# of Households in the study area & $\sim1.5$M\\
        \midrule
        \midrule
        Time Horizon & $15$ Hours\\
        \midrule
        Length of one time unit & $2$ minutes\\
        \midrule
        \# of nodes, edges in the TEG & $684.7$K, $841.6$K\\
        \midrule
        \midrule
        \# of binary, continuous variables in \newline \avgcostproblem{} MIP & $1751$, $843.7$K\\
        \midrule
        \# of Constraints in \avgcostproblem{} MIP & $1.4$M\\
    \bottomrule
    \end{tabular}
    \caption{Problem Instance Details}
    \label{tab:harris_county}
    \vspace{-1mm}
\end{table}

% To calculate time-varying risks associated with the roads we look at the distance between the roads and the coastline. The coastline is displayed by the thick pink polyline in Figure \ref{fig:harris_county}. The parameter values required for these calculations are presented in Section \ref{sec:risk_modeling}. The failure probability values of the edges of the time expanded graph also depend on how early the evacuation starts before the disaster (e.g., landfall of a hurricane). In our experiments, we assumed that evacuation starts 24 hours prior to the disaster.

We use a synthetic population \cite{adiga15US} to find the location of the households. We then assign the nearest exit ramp to each household as their source location. We assume that one vehicle is used per household for evacuation.

\subsection{\algo{} Results and Baseline Comparisons}
\label{sec:algo_exec}

We performed all our experiments and subsequent analyses on a high-performance computing cluster, with 128GB RAM and 4 CPU cores allocated to our tasks. In addition to \algo{}, we used two more methods to solve \avgcostproblem{}. We used a time limit of one hour for each method and compared the best solutions found within this time. The three methods we experimented with are described here.

\begin{table}[!h]
    \centering
    \begin{tabular}{p{1.5cm}p{1.35cm}p{1.35cm}p{2.75cm}}
    \toprule
    Baseline & \multicolumn{2}{c}{\algo{}} & Improvement\\
    \cmidrule{2-3}
     & Avg. & Std. Dev. & Over Baseline (\%)\\
    \midrule
    \multicolumn{4}{l}{\textbf{Average evacuation time (hours)}}\\
    \midrule
    2.54 & 2.21 & 0.06 & 13\\
    \midrule
    % \midrule
    \multicolumn{4}{l}{\textbf{Evacuation completion time (hours)}}\\
    \midrule
    7.83 & 6.21 & 0.35 & 20.69\\
    \midrule
    % \midrule
    \multicolumn{4}{l}{\textbf{Optimality guarantee (\%)}}\\
    \midrule
    20.47 & 8.51 & 2.43 & 58.43\\
    \bottomrule
    \end{tabular}
    \caption{\algo{} results for \avgcostproblem{} over ten experiment runs and comparison with the baseline method 
    % \cite{hafiz2021large} 
    in terms of three metrics: average evacuation time, evacuation completion time and optimality guarantee. On average, we see a $\sim13\%$, $\sim21\%$, and $\sim 58\%$ improvement in the three metrics respectively.}
    \label{tab:results_comparison}
    % \vspace{-2mm}
\end{table}

\begin{enumerate}[leftmargin=*]
% \subsubsection{Gurobi}
\item \textbf{Gurobi} In this experiment, we used Gurobi to directly solve model (1-8) with the \avgcostproblem{} objective. Gurobi was not able to find any feasible solution within the time limit. However, Gurobi was able to come up with a lower bound for the objective value. We used the lower bound to calculate the optimality guarantee of the solutions.

% \subsubsection{Benders Decomposition}
\item \textbf{Benders Decomposition} \cite{hafiz2021large} presented Benders Convergent (BC) method to solve the `Convergent Evacuation Planning' problem. Their problem is similar to \avgcostproblem{}, differing in the objective function, which is maximizing flow of evacuees instead of minimizing average evacuation time. We repurposed their method and used it as our baseline. 
%We allocated one hour of run time for this method. 
% At the end of the one hour time limit, we got a solution with average evacuation time of $ 2.54$ hours and evacuation completion time of $7.83$ hours.

% optimality gap through lower bound
% An advantage of using BC method is that, in addition to finding solutions to a problem, it can provide lower bounds for the objective value. However, when we used it for our problem instance of \avgcostproblem{}, the method was not able to come up with a lower bound better than the trivial value of zero, due to the size of the problem instance. We, therefore, used the lower bound found by Gurobi to calculate the optimality guarantee of the solution. The optimality guarantee was $\sim20.47\%$.

% \subsubsection{\algo{}}
% \begin{figure}[]
%     \centering
%     \includegraphics[width=0.35\textwidth]{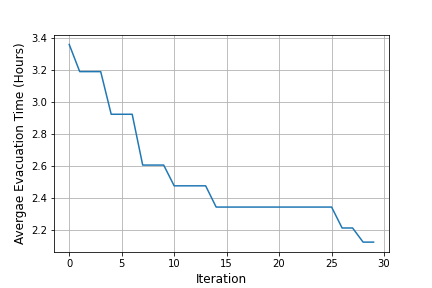}
%     \caption{Average evacuation time over iterations. The decrease in average evacuation time indicates improvement of the objective over the iterations.}
%     \label{fig:avg_evac_time_over_iterations}
%     % \vspace{-3mm}
% \end{figure}
\item \textbf{\algo{}} In our experiments with \algo{}, for \avgcostproblem{}, we used thirty iterations (i.e. $n=30$ in Algorithm \ref{alg:iterative_mip_heuristic} line \ref{algline:iterations}). Also, since we have a random selection process within \algo{}, we ran ten experiment runs with different seeds. 
\end{enumerate}

To compare the quality of our solutions with the baseline, we use three metrics: average evacuation time, evacuation completion time, and optimality guarantee. \textit{Optimality guarantee} is defined as follows: let the objective value of the solution \textit{sol} be $z_{sol}$ and the optimal objective be $z_{opt}$. Then, the optimality guarantee of \textit{sol} is $(z_{sol} - z_{opt})/z_{sol}$, i.e. the smaller the value of optimality guarantee, the better. If $z_{opt}$ is unknown, we can use a lower bound of it. Table \ref{tab:results_comparison} shows a comparison of our solutions with the baseline in terms of the three metrics.  
%We observe that even the worst solution from \algo{}, over the ten experiment runs, is better than the baseline in terms of all three metrics. The best and the average result from \algo{}, therefore, also outperform the baseline. 

Let the value of a metric $m$ for the baseline and the \algo{} solution be $m_{base}$ and $m_{lns}$ respectively. Then, we quantify the improvement over the baseline as $(m_{base} - m_{lns})/m_{base}$. On average, we see an improvement of $13\%$, $21\%$, and $58\%$ over the baseline in the three above-mentioned metrics respectively. This indicates that \algo{} finds better solutions than the baseline within the given time limit. 

We also applied \algo{} to find solutions of \completiontimeproblem{} and \partialavgcostproblem{} for our problem instance. Due to limited space, we provide the results in the supplementary materials~\cite{arXiv:2209.01535}. In general, the experiment results show that \algo{} can effectively solve the problems with different objectives.

% The objective of our MIP formulation is minimizing average evacuation time. However, an observation from Figure \ref{fig:avg_evac_time_over_iterations} is that, in several iterations we do not see any improvement in the objective. The reason for this becomes clear when we look at Figure \ref{fig:optimality_gap_over_iterations}. The solution we have at the start of an iteration may already be good enough when compared to solutions that are within thoure neighborhood we have defined by our random selection (Algorithm \ref{alg:iterative_mip_heuristic} line \ref{algline:random_selection}). Here, `good enough' means the solution is readily found to be within the optimality gap threshold ($5\%$). So we can avoid further search in such a neighborhood. 
% %We see this in iterations 3-13 and some later iterations in Figure \ref{fig:optimality_gap_without_rel}. 
% In iterations where the solution at hand is not within the threshold, the MIP optimizer either finds a better solution or proves that the solution at hand is within the threshold through further processing.

% mention run time and environment

% \begin{figure}
%     \centering
%     \includegraphics[width=0.4\textwidth]{images/increase_in_distance_travelled.png}
%     \caption{Increase in Distance Travelled by Evacuees from Shortest Length Routes to the Final Routes (Without Reliability Setting)}
%     \label{fig:increase_in_distance_travelled}
% \end{figure}

\subsection{\algosim{} Results}
\label{sec:simulation}
% So far in our experiments, we assumed that vehicles travel on edges at the maximum speed allowed (on those edges) and calculated the travel time and flow capacity of the edges accordingly. In this section, we present our experiment results with \algosim{} where we use the QueST simulator to estimate these values.

Within \algosim{}, we set the parameter $m=10$. We then experimented with two values for the parameter $p_e$, which are $5, 10$. We refer to these two settings as \algosim{}-$5\%$ and \algosim{}-$10\%$. We ran \algosim{} with the two settings and found two different solutions. We then performed agent-based simulations of the entire evacuation process (i.e. evacuate $100\%$ of the evacuees) using a solution of \algo{} and then also using solutions from \algosim{}-$5\%$ and \algosim{}-$10\%$. We used the QueST simulator with the logistic traffic model here. We now compare the simulation results.

\begin{figure}[!b]
    \centering
    \begin{subfigure}[b]{0.49\columnwidth}
        \includegraphics[width=\linewidth]{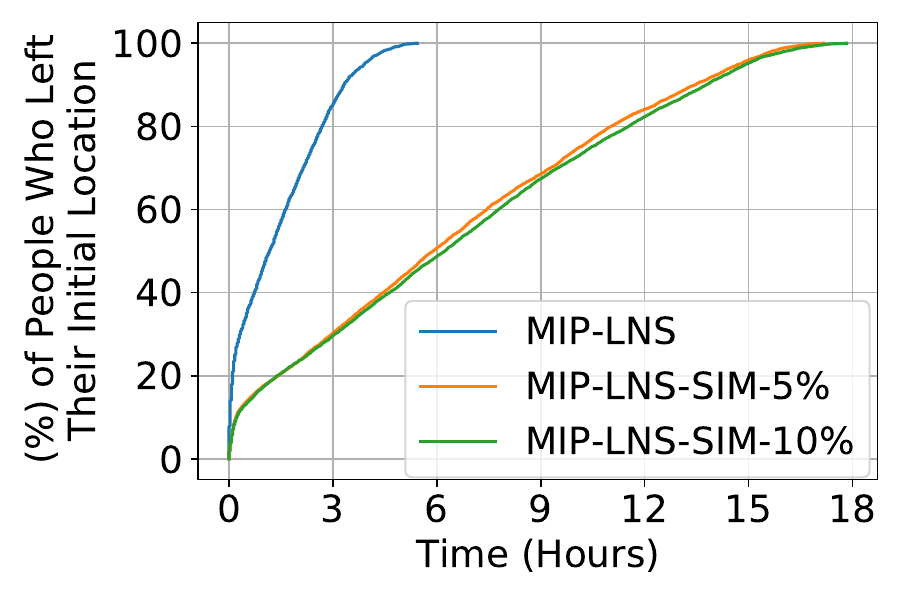}
        \caption{Departure rate of evacuees. We observe that in the \algo{} solution, evacuees leave very early compared to the solutions from \algosim{}-$5\%$ and \algosim{}-$10\%$. %The latter two regulates the departure of the evacuees 
        }
        \label{fig:departure_rate_comp}
    \end{subfigure}
    \hfill
    \begin{subfigure}[b]{0.49\columnwidth}
        \includegraphics[width=\linewidth]{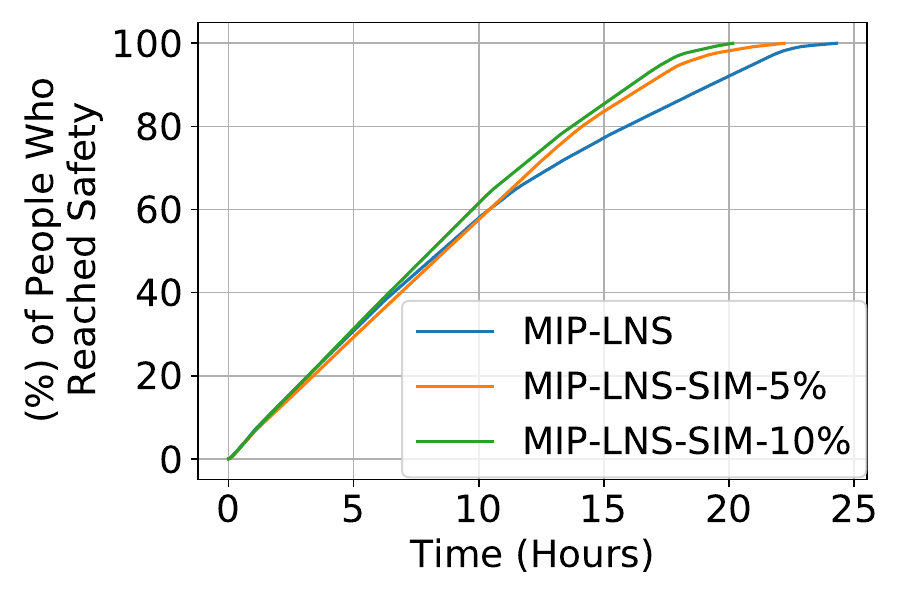}
        \caption{Arrival rate of evacuees at safe locations. \algosim{}-$10\%$ has the best evacuation rate and evacuation completion time, followed by \algosim{}-$5\%$ and then \algo{}.
        }
        \label{fig:evacuation_rate_comp}
    \end{subfigure}
    \caption{Comparison of \algo{}, \algosim{}-$5\%$, and \algosim{}-$10\%$ in terms of departure rate from sources and arrival rate at safe nodes. Even though \algosim{}-$5\%$, and \algosim{}-$10\%$ regulates the departure of evacuees, they evacuate everyone faster than \algo{}.}
    % \vspace{-2mm}
\end{figure}

\begin{figure}[!t]
    \centering
    \begin{subfigure}[b]{0.49\columnwidth}
        \includegraphics[width=\linewidth]{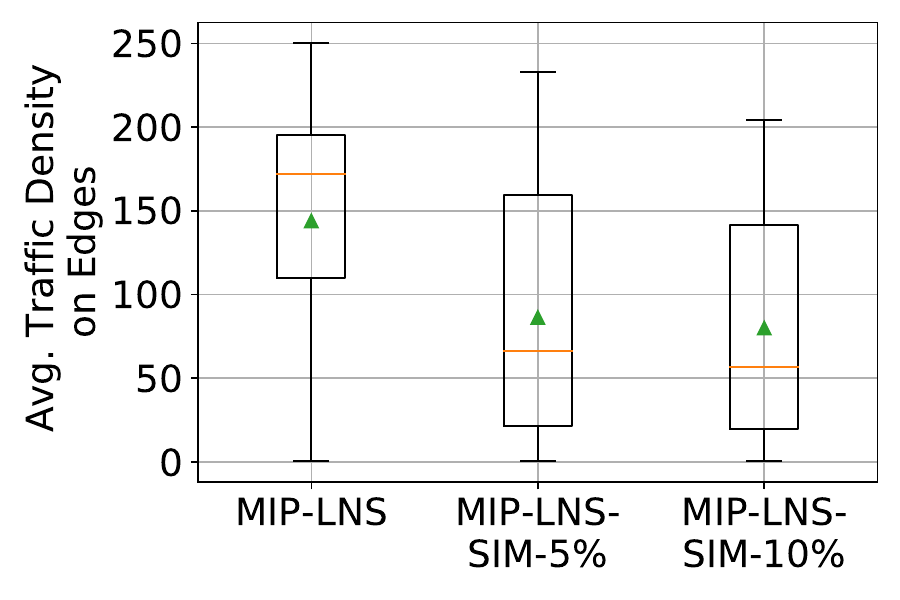}
        \caption{Boxplots showing average traffic density (NO. of vehicles per lane per km) on edges over the evacuation time period. Average traffic density is considerably low in the \algosim{}-$10\%$ and \algosim{}-$5\%$ solutions compared to \algo{}.}
        \label{fig:traffic_density_comp}
    \end{subfigure}
    \hfill
    \begin{subfigure}[b]{0.49\columnwidth}
        \includegraphics[width=\linewidth]{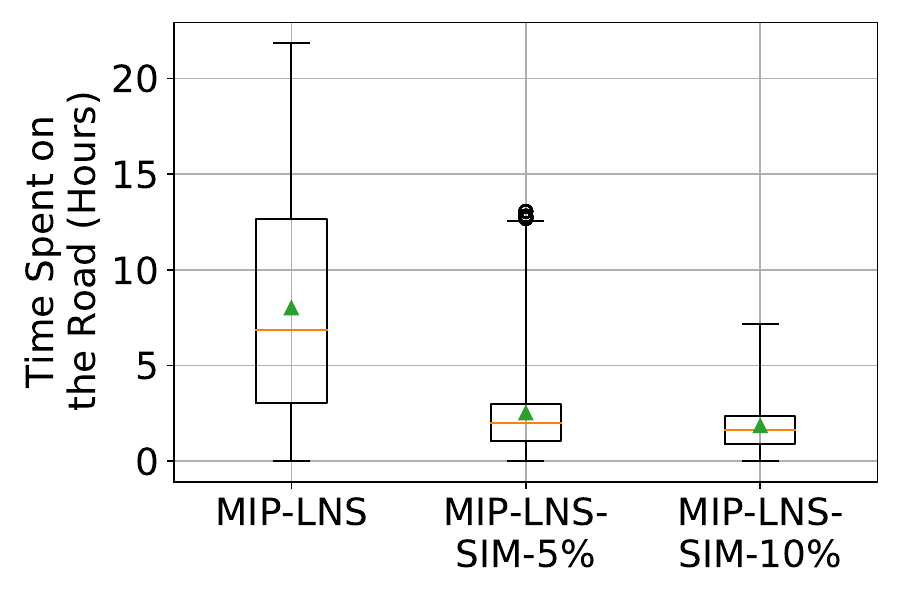}
        \caption{Boxplots showing time spent on the road by evacuees. Due to congestion (as in seen Figure \ref{fig:traffic_density_comp}), evacuees spend a significantly larger amount of time on the road in the \algo{} solution, compared to \algosim{}-$10\%$, \algosim{}-$5\%$ solutions.}
        \label{fig:transit_time_comp}
    \end{subfigure}
    \caption{Congestion on the roads in terms of traffic density and time spent on the road by evacuees.}
    \label{fig:traffic_density_and_transit_time_comp}
    % \vspace{-2mm}
\end{figure}

% \begin{table}[!h]
%     \centering
%     \begin{tabular}{ccc}
%     \toprule
%     \algo{} & \algosim{}-$10\%$ &  Improvement $(\%)$\\
%     \midrule
%     \multicolumn{3}{l}{\textbf{Average evacuation time (hours)}}\\
%     \midrule
%     9.45 & 8.47 & 10.37\\
%     \midrule
%     % \midrule
%     \multicolumn{3}{l}{\textbf{Evacuation completion time (hours)}}\\
%     \midrule
%     24.29 & 20.15 & 17.04\\
%     \midrule
%     % \midrule
%     \multicolumn{3}{l}{\textbf{Average time spent on the road (hours)}}\\
%     \midrule
%     8.01 & 1.85 & 76.88\\
%     \bottomrule
%     \end{tabular}
%     \caption{\algosim{}-$10\%$ outperforms \algo{} in terms of the average evacuation time, evacuation completion time, and average time spent on the road.}
%     \label{tab:results_comparison}
%     \vspace{-2mm}
% \end{table}

Figure (\ref{fig:departure_rate_comp}) shows the departure rate of the evacuees from their initial locations, in the final solution of the three settings. We observe that, \algosim{}-$5\%$, and \algosim{}-$10\%$ regulates the departure of evacuees to a significant extent (compared to \algo{}). As evacuees leave late in these solutions, we might expect the evacuation completion time to be higher in these solutions compared to \algo{}. Surprisingly, we observe in Figure (\ref{fig:evacuation_rate_comp}) that the evacuation completion time is actually smaller in \algosim{}-$5\%$, and \algosim{}-$10\%$ compared to \algo{}.
This implies that although evacuees left early in the \algo{} solution, they could not reach safety early due to the resulting congestion on the roads. In the \algosim{}-$5\%$ and \algosim{}-$10\%$ solutions, evacuees departed from their initial location over a longer period of time. This way there was less congestion on the road and the evacuation was completed early even though many people started late. 

Figure \ref{fig:traffic_density_and_transit_time_comp} verifies our last statement. We see that traffic density on the edges (i.e. number vehicles per lane and per km) is higher in the \algo{} solution (than \algosim{}-$5\%$, \algosim{}-$10\%$) throughout the evacuation time period. The higher traffic density then causes the evacuees to spend more time on the road. 
In summary, \algosim{} is better than \algo{} in terms of average evacuation time, evacuation completion time, and average time spent on the road ($10\%, 17\%, 77\%$ improvement respectively, detailed results in the supplementary materials~\cite{arXiv:2209.01535}). These results indicate that \algosim{} is better at evacuation planning than \algo{} in terms of reducing congestion on the roads. 
% This shows the usefulness of an agent-based simulator in evacuation planning.

\begin{table}[!h]
    \centering
    \begin{tabular}{p{2.95cm}p{1.45cm}p{1.45cm}p{1cm}}
    \toprule
    Algorithm & Estimated \newline ECT (hrs) & Simulated \newline ECT (hrs) & Percent \newline Error\\
    \midrule
    Baseline & 7.83 & 30.28 & 74.14\\
    \midrule
    \algo{} & 5.77 & 24.29 & 76.25\\
    \midrule
    \algosim{}-$5\%$ & 18.43 & 22.2 & 16.98\\
    \midrule
    \algosim{}-$10\%$ & 18.97 & 20.15 & 5.86\\
    \bottomrule
    \end{tabular}
    \caption{Estimated and simulated Evacuation Completion Time (ECT) in hours for the three settings and the baseline. The percent error of the estimated ECT decreases significantly with $p_e=5, 10$. This shows the effectiveness of \algosim{} in capturing delays due to congestion.}
    \label{tab:estimation_error}
    % \vspace{-2mm}
\end{table}

Finally, Table \ref{tab:estimation_error} shows the estimated and the simulated evacuation completion time for Baseline~\cite{hafiz2021large}, \algo{}, \algosim{}-$5\%$, and \algosim{}-$10\%$. For instance, \algo{} predicts that when using the routes and schedule provided by its solution, the evacuation will be completed within $5.77$ hours. However, when simulated, it actually took $24.29$ hours to evacuate everyone. We also observe that the percent error in estimation decreases considerably in \algosim{}-$5\%$ and \algosim{}-$10\%$ where we have used $p_e=5$ and $10$ respectively. The lower percent error is earned at a cost of higher algorithm run time (\algosim{}-$5\%$: $\sim 6.5$ hours, and  \algosim{}-$10\%$: $\sim 7.55$ hours).

\section{Conclusion}
\label{sec:conclusion}
In this paper, we have presented an optimization method \algo{} to solve a class of evacuation planning problems. We demonstrated its efficacy by applying it on our study area of Harris county, Houston, Texas. We showed that, for our problem instance, \algo{} finds better solutions than the baseline method in terms of three different metrics. We have also presented \algosim{} to capture congestion-dependent delays. Through our experiments we have showed that \algosim{} outperforms \algo{} in terms of multiple metrics when congestion-dependent delay is considered. Our method can be incorporated into disaster management systems for effective evacuation planning. Additionally, it can help assess social vulnerability\footnote{https://www.atsdr.cdc.gov/placeandhealth/svi/index.html} of regions. 

\section*{Acknowledgments}
This work was partially supported by University of Virginia Strategic Investment Fund SIF160, the NSF Grants: CCF-1918656, OAC-1916805, RISE-2053013, and the NASA Grant 80NSSC22K1048.

%% The file named.bst is a bibliography style file for BibTeX 0.99c
\bibliographystyle{named}
\bibliography{ijcai23}

\begin{thebibliography}{}

\bibitem[\protect\citeauthoryear{Adiga \bgroup \em et al.\egroup
  }{2013}]{adiga13transportation}
Abhijin Adiga, Madhav Marathe, Henning Mortveit, Sichao Wu, and Samarth Swarup.
\newblock Modeling urban transportation in the aftermath of a nuclear disaster:
  {T}he role of human behavioral responses.
\newblock In {\em The Conference on Agent-Based Modeling in Transportation
  Planning and Operations}, Blacksburg, VA, Sep 30 - Oct 2 2013.

\bibitem[\protect\citeauthoryear{Adiga \bgroup \em et al.\egroup
  }{2015}]{adiga15US}
Abhijin Adiga, Aditya Agashe, Shaikh Arifuzzaman, Christopher~L. Barrett,
  Richard~J. Beckman, Keith~R. Bisset, Jiangzhuo Chen, Youngyun Chungbaek,
  Stephen~G. Eubank, Sandeep Gupta, Maleq Khan, Christopher~J. Kuhlman, Eric
  Lofgren, Bryan~L. Lewis, Achla Marathe, Madhav~V. Marathe, Henning~S.
  Mortveit, Eric Nordberg, Caitlin Rivers, Paula Stretz, Samarth Swarup, Amanda
  Wilson, and Dawen Xie.
\newblock Generating a synthetic population of the {U}nited {S}tates.
\newblock Technical Report NDSSL 15-009, Network Dynamics and Simulation
  Science Laboratory, 2015.

\bibitem[\protect\citeauthoryear{Amaran \bgroup \em et al.\egroup
  }{2016}]{amaran2016simulation}
Satyajith Amaran, Nikolaos~V Sahinidis, Bikram Sharda, and Scott~J Bury.
\newblock Simulation optimization: a review of algorithms and applications.
\newblock {\em Annals of Operations Research}, 240(1):351--380, 2016.

\bibitem[\protect\citeauthoryear{Bayram}{2016}]{Bayram2016}
Vedat Bayram.
\newblock Optimization models for large scale network evacuation planning and
  management: A literature review.
\newblock {\em Surveys in Operations Research and Management Science},
  21(2):63--84, December 2016.

\bibitem[\protect\citeauthoryear{Benders}{1962}]{benders1962partitioning}
Jacques~F Benders.
\newblock Partitioning procedures for solving mixed-variables programming
  problems.
\newblock {\em Numerische mathematik}, 4(1):238--252, 1962.

\bibitem[\protect\citeauthoryear{Carpender \bgroup \em et al.\egroup
  }{2006}]{carpender2006urban}
S~Kay Carpender, Paul~H Campbell, Barbara~J Quiram, Joshua Frances, and Jill~J
  Artzberger.
\newblock Urban evacuations and rural america: lessons learned from hurricane
  rita.
\newblock {\em Public Health Reports}, 121(6):775--779, 2006.

\bibitem[\protect\citeauthoryear{Chen \bgroup \em et al.\egroup
  }{2006}]{Chen2006}
Jiangzhuo Chen, Rajmohan Rajaraman, and Ravi Sundaram.
\newblock Meet and merge: Approximation algorithms for confluent flows.
\newblock {\em Journal of Computer and System Sciences}, 72(3):468--489, May
  2006.

\bibitem[\protect\citeauthoryear{Chuzhoy \bgroup \em et al.\egroup
  }{2017}]{chuzhoy2017new}
Julia Chuzhoy, David~HK Kim, and Rachit Nimavat.
\newblock New hardness results for routing on disjoint paths.
\newblock In {\em Proceedings of the 49th Annual ACM SIGACT Symposium on Theory
  of Computing}, pages 86--99, 2017.

\bibitem[\protect\citeauthoryear{Dangelmaier \bgroup \em et al.\egroup
  }{2006}]{dangelmaier2006simulation}
Wilhelm Dangelmaier, Kiran~R Mahajan, Thomas Seeger, Benjamin Klopper, and Mark
  Aufenanger.
\newblock Simulation assisted optimization and real-time control aspects of
  flexible production systems subject to disturbances.
\newblock In {\em Proceedings of the 2006 winter simulation conference}, pages
  1785--1795. IEEE, 2006.

\bibitem[\protect\citeauthoryear{do Amaral \bgroup \em et al.\egroup
  }{2022}]{SoaresdoAmaral2022}
Jo{\~{a}}o Victor~Soares do~Amaral, Jos{\'{e}} Arnaldo~Barra Montevechi, Rafael
  de~Carvalho~Miranda, and Wilson~Trigueiro de~Sousa~Junior.
\newblock Metamodel-based simulation optimization: A systematic literature
  review.
\newblock {\em Simulation Modelling Practice and Theory}, 114:102403, January
  2022.

\bibitem[\protect\citeauthoryear{Doppa}{2021}]{doppa2021adaptive}
Janardhan~Rao Doppa.
\newblock Adaptive experimental design for optimizing combinatorial structures.
\newblock In {\em IJCAI}, pages 4940--4945, 2021.

\bibitem[\protect\citeauthoryear{Even \bgroup \em et al.\egroup
  }{2015}]{even2015convergent}
Caroline Even, Victor Pillac, and Pascal Van~Hentenryck.
\newblock Convergent plans for large-scale evacuations.
\newblock In {\em Proceedings of the AAAI Conference on Artificial
  Intelligence}, volume~29, 2015.

\bibitem[\protect\citeauthoryear{Ford and Fulkerson}{2015}]{ford2015flows}
Lester~Randolph Ford and Delbert~Ray Fulkerson.
\newblock Flows in networks.
\newblock In {\em Flows in Networks}. Princeton university press, 2015.

\bibitem[\protect\citeauthoryear{Fortune \bgroup \em et al.\egroup
  }{1980}]{fortune1980directed}
Steven Fortune, John Hopcroft, and James Wyllie.
\newblock The directed subgraph homeomorphism problem.
\newblock {\em Theoretical Computer Science}, 10(2):111--121, 1980.

\bibitem[\protect\citeauthoryear{Golin \bgroup \em et al.\egroup
  }{2017}]{Golin2017NonapproximabilityAP}
Mordecai~J. Golin, Hadi Khodabande, and Bo~Qin.
\newblock Non-approximability and polylogarithmic approximations of the
  single-sink unsplittable and confluent dynamic flow problems.
\newblock In {\em ISAAC}, 2017.

\bibitem[\protect\citeauthoryear{Gosavi and
  others}{2015}]{gosavi2015simulation}
Abhijit Gosavi et~al.
\newblock {\em Simulation-based optimization}.
\newblock Springer, 2015.

\bibitem[\protect\citeauthoryear{{Gurobi Optimization, LLC}}{2023}]{gurobi}
{Gurobi Optimization, LLC}.
\newblock {Gurobi Optimizer Reference Manual}.
\newblock \url{https://www.gurobi.com}, 2023.
\newblock Accessed: 2023-05-25.

\bibitem[\protect\citeauthoryear{Guruswami \bgroup \em et al.\egroup
  }{2003}]{guruswami2003near}
Venkatesan Guruswami, Sanjeev Khanna, Rajmohan Rajaraman, Bruce Shepherd, and
  Mihalis Yannakakis.
\newblock Near-optimal hardness results and approximation algorithms for
  edge-disjoint paths and related problems.
\newblock {\em Journal of Computer and System Sciences}, 67(3):473--496, 2003.

\bibitem[\protect\citeauthoryear{Hafiz~Hasan and
  Van~Hentenryck}{2021}]{hafiz2021large}
Mohd Hafiz~Hasan and Pascal Van~Hentenryck.
\newblock Large-scale zone-based evacuation planning—{P}art {I}: {M}odels and
  algorithms.
\newblock {\em Networks}, 77(1):127--145, 2021.

\bibitem[\protect\citeauthoryear{Hamacher and Tjandra}{2002}]{hamacher2002}
H.~Hamacher and S.~Tjandra.
\newblock Mathematical modeling of evacuation problems: A state of the art.
\newblock {\em Pedestrian and Evacuation Dynamics}, 2002, 01 2002.

\bibitem[\protect\citeauthoryear{{HERE}}{2023}]{HERE}
{HERE}.
\newblock {HERE Map Data}.
\newblock \url{https://www.here.com/platform/map-data}, 2023.
\newblock Accessed: 2023-05-25.

\bibitem[\protect\citeauthoryear{Islam \bgroup \em et al.\egroup
  }{2020}]{Islam2020}
Kazi~Ashik Islam, Madhav Marathe, Henning Mortveit, Samarth Swarup, and Anil
  Vullikanti.
\newblock A simulation-based approach for large-scale evacuation planning.
\newblock In {\em 2020 {IEEE} International Conference on Big Data (Big Data)}.
  {IEEE}, December 2020.

\bibitem[\protect\citeauthoryear{Islam \bgroup \em et al.\egroup
  }{2023}]{arXiv:2209.01535}
Kazi~Ashik Islam, Da~Qi Chen, Madhav Marathe, Henning Mortveit, Samarth Swarup,
  and Anil Vullikanti.
\newblock Simulation-assisted optimization for large-scale evacuation planning
  with congestion-dependent delays, 2023.
\newblock \href{https://arxiv.org/abs/arXiv:2209.01535}{arXiv:2209.01535}.

\bibitem[\protect\citeauthoryear{Kambhampati}{2020}]{kambhampati2020challenges}
Subbarao Kambhampati.
\newblock Challenges of human-aware {AI} systems: {AAAI} presidential address.
\newblock {\em AI Magazine}, 41(3):3--17, 2020.

\bibitem[\protect\citeauthoryear{Kim \bgroup \em et al.\egroup
  }{2007}]{Kim:2007:ERP:1341012.1341039}
Sangho Kim, Betsy George, and Shashi Shekhar.
\newblock Evacuation route planning: Scalable heuristics.
\newblock In {\em Proceedings of the 15th Annual ACM International Symposium on
  Advances in Geographic Information Systems}, GIS '07, pages 20:1--20:8, New
  York, NY, USA, 2007. ACM.

\bibitem[\protect\citeauthoryear{Li \bgroup \em et al.\egroup }{2021}]{Li2021}
Jiaoyang Li, Zhe Chen, Daniel Harabor, Peter~J. Stuckey, and Sven Koenig.
\newblock Anytime multi-agent path finding via large neighborhood search.
\newblock In {\em Proceedings of the Thirtieth International Joint Conference
  on Artificial Intelligence}. International Joint Conferences on Artificial
  Intelligence Organization, August 2021.

\bibitem[\protect\citeauthoryear{Lu \bgroup \em et al.\egroup
  }{2005}]{Lu:2005:CCR:2156226.2156249}
Qingsong Lu, Betsy George, and Shashi Shekhar.
\newblock Capacity constrained routing algorithms for evacuation planning: A
  summary of results.
\newblock In {\em Proceedings of the 9th International Conference on Advances
  in Spatial and Temporal Databases}, SSTD'05, pages 291--307, Berlin,
  Heidelberg, 2005. Springer-Verlag.

\bibitem[\protect\citeauthoryear{Magnanti and Wong}{1981}]{Magnanti1981}
T.~L. Magnanti and R.~T. Wong.
\newblock Accelerating benders decomposition: Algorithmic enhancement and model
  selection criteria.
\newblock {\em Operations Research}, 29(3):464--484, June 1981.

\bibitem[\protect\citeauthoryear{Mannering and
  Washburn}{2020}]{mannering2020principles}
Fred~L Mannering and Scott~S Washburn.
\newblock {\em Principles of highway engineering and traffic analysis}.
\newblock John Wiley \& Sons, 2020.

\bibitem[\protect\citeauthoryear{Naves \bgroup \em et al.\egroup
  }{2010}]{naves2010maximum}
Guyslain Naves, Nicolas Sonnerat, and Adrian Vetta.
\newblock Maximum flows on disjoint paths.
\newblock In {\em Approximation, Randomization, and Combinatorial Optimization.
  Algorithms and Techniques}, pages 326--337. Springer, 2010.

\bibitem[\protect\citeauthoryear{of Law~Enforcement}{2022}]{hurricane_ian:2022}
Florida~Department of~Law~Enforcement.
\newblock Update: Florida medical examiners commission hurricane ian deaths,
  Oct 2022.
\newblock
  \url{https://www.fdle.state.fl.us/News/2022/October/Update-Florida-Medical-Examiners-Commission-H-(15)},
  [Online; Accessed 28 Oct 2022].

\bibitem[\protect\citeauthoryear{Osorio and Bierlaire}{2013}]{Osorio2013}
Carolina Osorio and Michel Bierlaire.
\newblock A simulation-based optimization framework for urban transportation
  problems.
\newblock {\em Operations Research}, 61(6):1333--1345, December 2013.

\bibitem[\protect\citeauthoryear{Pisinger and Ropke}{2018}]{Pisinger2018}
David Pisinger and Stefan Ropke.
\newblock Large neighborhood search.
\newblock In {\em Handbook of Metaheuristics}, pages 99--127. Springer
  International Publishing, September 2018.

\bibitem[\protect\citeauthoryear{Robertson and
  Seymour}{1995}]{robertson1995graph}
Neil Robertson and Paul~D Seymour.
\newblock Graph minors. xiii. the disjoint paths problem.
\newblock {\em Journal of combinatorial theory, Series B}, 63(1):65--110, 1995.

\bibitem[\protect\citeauthoryear{Romanski and
  Van~Hentenryck}{2016}]{romanski2016benders}
Julia Romanski and Pascal Van~Hentenryck.
\newblock Benders decomposition for large-scale prescriptive evacuations.
\newblock In {\em Thirtieth AAAI Conference on Artificial Intelligence}, 2016.

\bibitem[\protect\citeauthoryear{Sajedinejad \bgroup \em et al.\egroup
  }{2011}]{Sajedinejad2011}
Arman Sajedinejad, Soheil Mardani, Erfan Hasannayebi, S.~Ahmad Reza
  Mir~Mohammadi K., and Alireza Kabirian.
\newblock {SIMARAIL}: Simulation based optimization software for scheduling
  railway network.
\newblock In {\em Proceedings of the 2011 Winter Simulation Conference
  ({WSC})}. {IEEE}, December 2011.

\bibitem[\protect\citeauthoryear{Shahabi and Wilson}{2014}]{Shahabi2014}
Kaveh Shahabi and John~P. Wilson.
\newblock {CASPER}: Intelligent capacity-aware evacuation routing.
\newblock {\em Computers, Environment and Urban Systems}, 46:12--24, July 2014.

\bibitem[\protect\citeauthoryear{Shaw}{1998}]{Shaw1998}
Paul Shaw.
\newblock Using constraint programming and local search methods to solve
  vehicle routing problems.
\newblock In {\em Principles and Practice of Constraint Programming
  {\textemdash} {CP}98}, pages 417--431. Springer Berlin Heidelberg, 1998.

\bibitem[\protect\citeauthoryear{Skutella}{2009}]{skutella2009introduction}
Martin Skutella.
\newblock An introduction to network flows over time.
\newblock In {\em Research trends in combinatorial optimization}, pages
  451--482. Springer, 2009.

\bibitem[\protect\citeauthoryear{Teufl \bgroup \em et al.\egroup
  }{2018}]{teufl2018optimised}
Sabine Teufl, Kayode Owa, Dirk Steinhauer, Elkin Castro, Graham Herries, Robert
  John, and Svetan Ratchev.
\newblock Optimised--developing a state of the art system for production
  planning for industry 4.0 in the construction industry using simulation-based
  optimisation.
\newblock {\em Transdisciplinary Engineering Methods for Social Innovation of
  Industry 4.0}, pages 731--740, 2018.

\bibitem[\protect\citeauthoryear{Van~Hentenryck}{2013}]{van2013computational}
Pascal Van~Hentenryck.
\newblock Computational disaster management.
\newblock In {\em Twenty-third international joint conference on artificial
  intelligence}, 2013.

\end{thebibliography}

\clearpage

\appendix
\section{Time Expanded Graph for Capturing Flow Over Time}
Joint routing and scheduling over networks requires one to study \textit{flows over time}, as static flows make the unrealistic assumption that flows travel instantaneously.

\begin{figure}[!h]
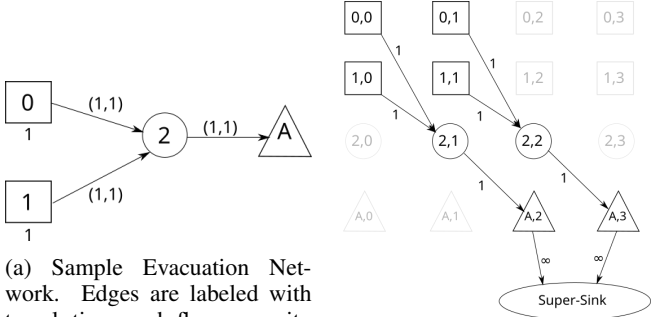

    \centering
    \begin{subfigure}[b]{0.475\columnwidth}
        \includegraphics[width=\linewidth]{images/sample_network.png}
        \caption{Sample Evacuation Network. Edges are labeled with travel time and flow capacity respectively. Source, safe and transit nodes are denoted by squares, triangles, and circles respectively. Source nodes are labeled with number of evacuees.}
        \label{fig:sample_network_appendix}
    \end{subfigure}
    \hfill
    \begin{subfigure}[b]{0.475\columnwidth}
        \includegraphics[width=\linewidth]{images/sample_teg.png}
        \caption{Time Expanded Graph (TEG) for the Sample Network. Edges are labeled with capacity. Construction of this TEG sets an upper bound of 3 time units for evacuation completion.}
        \label{fig:sample_teg_appendix}
    \end{subfigure}
    \caption{Sample Problem Instance}
    \vspace{-2mm}
\end{figure}

For instance, let us consider the sample evacuation network shown in Figure \ref{fig:sample_network_appendix}. All three edges in this network have a capacity of $1$, which means $1$ car can enter the link in a single timestep. However, sending flow from both sources ($0$ and $1$) at a rate of $1$ evacuee per timestep will not work because then two evacuees will reach node $2$ at the same timestep but only one evacuee will be able to enter edge $(2, A)$. The main issue here is that we need to keep track of available capacity on the edges at different points in time. We cannot do it using static flows because static flows have the underlying assumption that flows travel instantaneously.  
For this purpose, researchers have defined dynamic flows (\cite{skutella2009introduction,ford2015flows}) and used time expanded graphs to solve dynamic flow problems (\cite{romanski2016benders,hafiz2021large}). In this paper, we have also used a time expanded graph (\textbf{TEG}) to capture the flow of evacuees over time.

\section{Formulating \completiontimeproblem{} and \partialavgcostproblem{} as Mixed Integer Programs}
\label{sec:problem_variants}
In this section, we present the details of how we formulate \completiontimeproblem{} and \partialavgcostproblem{} as MIPs. Note that, both of these problems can be solved using \algo{} without making any changes to the algorithm.

\subsection{\completiontimeproblem{}}
Evacuation completion time cannot be represented as a linear function of the variables $x, \phi$. To optimize for this objective we modify the time expanded graph as follows: we add a new node $z_t$ for each timestep $t \leq \mathcal{H}$. Then, we add an edge from each safe node $(node, t) \in \mathcal{S}^x$ to the node $z_t$. Finally, we add add an edge from node $z_t, \forall t \leq \mathcal{H}$ to the super-sink $(v_t)$. Figure \ref{fig:ect_variant} shows an example with two safe nodes on how to do this construction.

\begin{figure}[!h]
    \centering
    \includegraphics[width=0.3\textwidth]{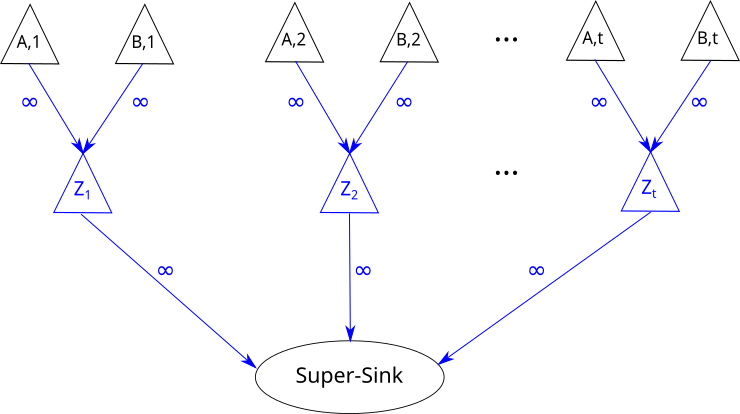}
    \caption{Modification of the time expanded graph for minimizing evacuation completion time. In this example, we have two safe nodes A and B. Blue nodes and edges are newly added to the TEG. Edges are labeled with their capacity.}
    \label{fig:ect_variant}
\end{figure}

With the new time expanded graph, we can represent the evacuation completion time (ECT) as follows:
\begin{align}
    \text{ECT} = \max_{e \in \delta^-(v_t)} \mathbbm{1}[\phi_e > 0] t_s(e) \label{eq:ect_with_indicator}
\end{align}

Here, $t_s(e)$ denotes the timestep of the starting node of edge $e$. $\mathbbm{1}$ denotes the indicator function, i.e. 
\begin{align}
    \mathbbm{1}[\phi_e > 0] = \begin{cases} \nonumber
      1  & \text{ if } \phi_e > 0 \\
      0 & \text{ otherwise}
    \end{cases}
\end{align}

As (\ref{eq:ect_with_indicator}) contains the indicator function, we introduce binary variables $y_e, \forall e \in \delta^-(v_t)$ and enforce that: ${\phi_e > 0 \iff y_e = 1}$. We can do it by adding the following constraints to model (\ref{obj:mip_objective}--\ref{constr:edge_assignment_binary}):
\begin{align}
    & \phi_e \geq \epsilon - M(1 - y_e) & \forall e \in \delta^-(v_t) \label{constr:big_M_1}\\
    & \phi_e \leq My_e & \forall e \in \delta^-(v_t) \label{constr:big_M_2}
\end{align}

Here, $\epsilon$ is a small positive constant (e.g. $0.001$) and $M$ is a positive constant which is equal to the total number of evacuees. With these new variables and constraints, we can now represent ECT as:
\begin{align}
    \text{ECT} = \max_{e \in \delta^-(v_t)} y_e t_s(e) \label{eq:ect_linear}
\end{align}

By minimizing (\ref{eq:ect_linear}) with model (\ref{constr:single_outgoing_edge}--\ref{constr:edge_assignment_binary},\ref{constr:big_M_1}--\ref{constr:big_M_2}) we can achieve the goal of minimizing evacuation completion time.

\subsection{\partialavgcostproblem{}} 
Minimizing the Average/Total Evacuation Time of $p$-fraction of the evacuees: In evacuation scenarios, some evacuees may be in such a situation that the cost of evacuating them may dramatically increase the overall evacuation objective, e.g. the average evacuation time. We consider such evacuees as `outliers'. A common way to handle outliers is to optimize the desired objective for the `non-outlier' evacuees, while taking into consideration that the `outlier' evacuees will also evacuate and use the same road network. We formally define the problem as follows:

\begin{problem}{Outlier Average Dynamic Confluent Flow Problem (\partialavgcostproblem{})}. 
Given an evacuation network, let $T_{max}$ represent an upper bound on evacuation time. Given $p \in [0, 1]$, find an evacuation schedule $S$ such that all evacuees arrive at some safe node before time $T_{max}$ while minimizing $\frac{1}{|W_p|}\sum_{i \in W_p} t_i$ where $W_p$ is the set of $p-$fraction of the evacuees with the lowest evacuation time in schedule $S$.
\end{problem}

To optimize the objective of \partialavgcostproblem{}, we extend the time expanded graph (TEG) as follows: First, we add a new `bypass' node $(A^{\prime}, 0)$ to the TEG. Then, from each safe node $s \in \mathcal{S}^x$, we add an edge to $(A^{\prime}, 0)$ with infinite capacity. Finally, we add an edge from $(A^{\prime}, 0)$ to the super-sink ($v_t$) with capacity $(\sum_{k \in \mathcal{E}} d_k) * (1 - p)$. Figure \ref{fig:outlier_variant} shows the newly constructed time expanded graph for our sample problem instance. With this new TEG, minimizing objective (\ref{eq:total_evac_time_obj}) will give us the desired solution. Here, the model solver will decide which $(1-p)$ fraction of the evacuees are the outliers and then direct them through the `bypass' node. As the timestep of this bypass node is zero, the cost for the outlier evacuees will not be added to the final objective. However, as they still have to reach the safe nodes through the road network, they will contribute to the congestion on the road.

\begin{figure}[!h]
    \centering
    \includegraphics[width=0.2\textwidth]{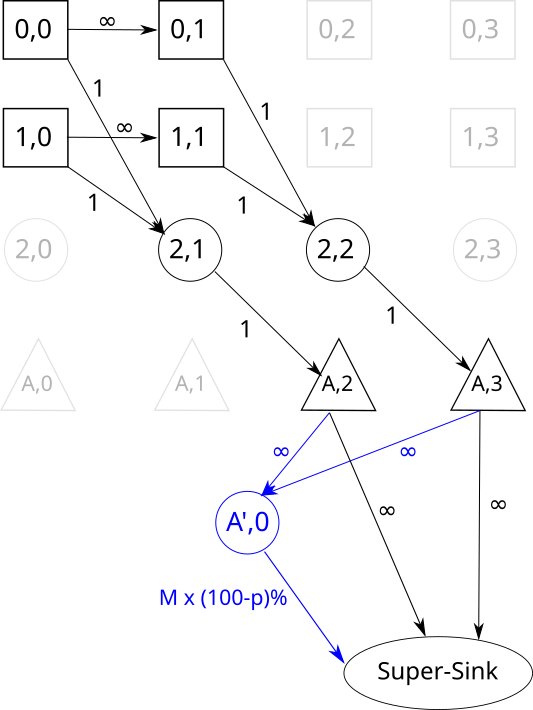}
    \caption{Extension of the time expanded graph for minimizing Average/Total Evacuation Time of $p\%$ of the evacuees. The blue node denotes the `bypass' node. Edges are labeled with their capacity. $M$ denotes the total number of evacuees.}
    \label{fig:outlier_variant}
    % \vspace{-6mm}
\end{figure}

% \subsection{Maximizing Flow of Evacuees within the Time Horizon}
% In model (\ref{obj:mip_objective}--\ref{constr:edge_assignment_binary}), constraint (\ref{constr:total_population}) enforces that all evacuees must reach safety. However, it might not be possible to evacuate everyone within the given time horizon. In such cases, a natural objective is to maximize the number people who successfully evacuate within the time horizon. This can be easily achieved by doing the following: ($i$) remove constraint (\ref{constr:total_population}), ($ii$) add the constraint $\sum_{e \in \delta^+(k)}\sum_{t \in \mathcal{H}} \phi_{e_t} \leq d_k$ to ensure that the flow of evacuees from the source nodes is less or equal to the number of evacuees, and ($iii$) optimize the objective $\max_{x, \phi} \sum_{e \in \delta^+(k)}\sum_{t \in \mathcal{H}} \phi_{e_t}$ which is maximizing the flow of evacuees from the source nodes over the time horizon.

\section{Algorithms}
\label{sec:appendix_algorithms}
We use Algorithm \ref{alg:calc_convergent_routeset} to calculate the initial feasible solution for our heuristic search method \algo{}.
\begin{algorithm}[!h]
    \caption{Calculate Initial Convergent Route Set}
    \label{alg:calc_convergent_routeset}
    % \SetKwRepeat{Do}{do}{while}
    \KwIn{Road Network Graph: $\mathcal{G}(\mathcal{N}, \mathcal{A})$}
    \KwOut{A Convergent Route set}
    $\mathcal{A}^\prime \gets \{e(v, u) | e(u, v) \in \mathcal{A}\}$\\
    Construct the graph $\mathcal{G}^\prime(\mathcal{N}, \mathcal{A}^\prime)$.\\
    Add a node $v_{sink}$ to $\mathcal{G}^\prime$.\\
    Add edges $e(v_{sink}, v)$ to $\mathcal{G}^\prime, \forall v \in \mathcal{S}$ with $T_e = 0$.\\
    Calculate shortest paths in $\mathcal{G}^\prime$ from $v_{sink}$ to all nodes $v \in \mathcal{N}$ using a single source shortest path algorithm.\\
    \textit{routeset} $\gets$ Shortest paths from $v_{sink}$ to all nodes $v \in \mathcal{E}$.\\
    Reverse the direction of each path in \textit{routeset} and remove node $v_{sink}$.\\
    \Return \textit{routeset}
\end{algorithm}

% We use Algorithm \ref{alg:calc_best_schedule_for_initial_routeset} to calculate the schedule that minimizes the target objective using the initial route set.

% \begin{algorithm}[!h]
%     \caption{Calculate Best Schedule for the Initial Route Set}
%     \label{alg:calc_best_schedule_for_initial_routeset}
%     \KwIn{Convergent Route set: \textit{routeset}, model (\ref{obj:mip_objective}--\ref{constr:edge_assignment_binary})}
%     \KwOut{Evacuation schedule: \textit{schedule}}
%     $model_{lp} \gets$ Linear Program after fixing the binary variables $x_e$ in model (\ref{obj:mip_objective}--\ref{constr:edge_assignment_binary}) using \textit{routeset}\\
%     \textit{schedule} $\gets$ Optimal solution of $model_{lp}$\\
%     \Return \textit{schedule}
% \end{algorithm}

\section{\algo{} Progress Over Iterations for \avgcostproblem{}}
To visualize the progress of \algo{} over the iterations for \avgcostproblem{}, we look at the experiment run (out of the ten experiment runs) that returned the best solution. Figure \ref{fig:avg_evac_time_over_iterations} shows the average evacuation time of the evacuees at different iterations of \algo{} . At iteration zero, we have our initial solution that has an average evacuation time of $3.36$ hours and an evacuation completion time of $13.5$ hours. After thirty iterations, \algo{} returned a solution with an average evacuation time of $2.12$ hours and an evacuation completion time of $5.77$ hours.
\begin{figure}[]
    \centering
    \includegraphics[width=0.35\textwidth]{}
    \caption{Average evacuation time over iterations. The decrease in average evacuation time indicates improvement of the objective over the iterations.}
    \label{fig:avg_evac_time_over_iterations}
\end{figure}

\section{\completiontimeproblem{} and \partialavgcostproblem{} Solutions}
\label{sec:variant_results}
In this section, we provide experiment results on the \completiontimeproblem{} and \partialavgcostproblem{} problems. 

We ran ten experiment runs (with different random seeds) of \algo{} for both \completiontimeproblem{} and \partialavgcostproblem{}. We again used the one hour time limit for these experiment runs. Figures \ref{fig:variants_aet}, \ref{fig:variants_ect}, \ref{fig:variants_non_aet} shows a comparison of the \avgcostproblem{}, \completiontimeproblem{}, and \partialavgcostproblem{} solutions in terms of the three metrics: average evacuation time, evacuation completion time, and non-outlier average evacuation time. Note that, these metrics are also the objectives of \avgcostproblem{}, \completiontimeproblem{}, and \partialavgcostproblem{} respectively. 

From the box-plots we observe that, in general, \avgcostproblem{}, \completiontimeproblem{}, and \partialavgcostproblem{} solutions are superior compared to the other solutions in terms of the objective they are designed to optimize, respectively. For instance, \avgcostproblem{} solutions are, in general, superior than \completiontimeproblem{} and \partialavgcostproblem{} solutions in terms of average evacuation time, but worse in terms of the other two objectives. This shows the effectiveness of \algo{} and of our formulation in solving evacuation planning problems, while optimizing for different objectives.

\begin{figure}[t]
    \centering
    \begin{subfigure}[b]{0.6\columnwidth}
        \includegraphics[width=\linewidth]{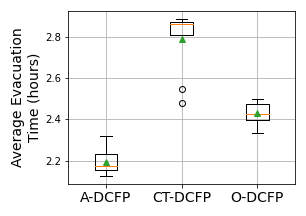}
        \caption{}
        \label{fig:variants_aet}
    \end{subfigure}
    \hfill
    \begin{subfigure}[b]{0.6\columnwidth}
        \includegraphics[width=\linewidth]{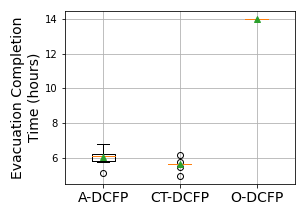}
        \caption{}
        \label{fig:variants_ect}
    \end{subfigure}
    \hfill
    \begin{subfigure}[b]{0.6\columnwidth}
        \includegraphics[width=\linewidth]{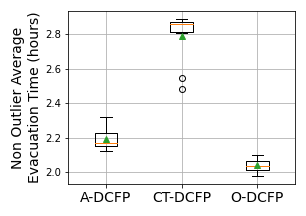}
        \caption{}
        \label{fig:variants_non_aet}
    \end{subfigure}
    \caption{Box-plots showing comparison of \algo{} solutions for \avgcostproblem{}, \completiontimeproblem{} and \partialavgcostproblem{} over ten experiment runs. Figures \ref{fig:variants_aet}, \ref{fig:variants_ect}, \ref{fig:variants_non_aet} show the comparison in terms of the three metrics average evacuation time, evacuation completion time, and non-outlier average evacuation time, respectively. We observe that the \avgcostproblem{}, \completiontimeproblem{}, and \partialavgcostproblem{} solutions are superior compared to the other solutions in terms of the objective they are designed to optimize, respectively.}
    \vspace{-3mm}
\end{figure}

\section{Calculating Flow Capacity of Edges}
Traffic flow, vehicle speed and traffic density are related by the following equation (Mannering and Washburn~\cite[Chapter 5, Equation 5.14]{mannering2020principles}):

\begin{align}
    q = uk \label{eq:flow_speed_density}
\end{align}

where:
\begin{align}
    q &= \text{Traffic flow (vehicles per time unit)} \nonumber\\
    u &= \text{vehicle speed (km per time unit)} \nonumber\\
    k &= \text{Traffic density (vehicles per km)} \nonumber
\end{align}

For an edge $e$, let:
\begin{align}
    l_e &= \text{length of edge $e$ (km)} \nonumber\\
    n_e &= \text{Number of lanes in edge $e$} \nonumber\\
    \hat{k}_e &= \text{Number of vehicles per lane per km on edge $e$} \nonumber
\end{align}

Then, we have:
\begin{align}
    k_e &= \hat{k}_e n_e
\end{align}

We assume that the maximum value of $\hat{k}$ for all edges is: 
\begin{align}
    & \hat{k}_{max} = 250 & \text{ [vehicles per lane per km]}\\
    \implies & k_{max} = 250n_e
\end{align}

Given a constant travel time $t_e$ for edge $e$, speed on the edge will be:
\begin{align}
    u_e = \frac{l_e}{t_e} \label{eq:speed_from_time}
\end{align}

Then, from Equation \ref{eq:flow_speed_density}:
\begin{align}
    q_{max} = u_e k_{max} \label{eq:flow_capacity}
\end{align}

Given a constant travel time for an edge, we use Equation \ref{eq:speed_from_time} and \ref{eq:flow_capacity} to calculate the flow capacity of the edge.

\section{Comparison of \algosim{} and \algo{} Considering Congestion Dependent Delays}
To compare the evacuation plans returned by \algosim{} and \algo{} with the consideration of congestion dependent delays, we run agent-based simulation of the entire evacuation (i.e. evacuate $100\%$ of the evacuees) using each of these plans. Here, we use the QueST simulator~\cite{Islam2020} with the logistic traffic model. From the simulation results, we calculate the three metrics: average evacuation time, evacuation completion time, and average time spent on the road by evacuees. Table \ref{tab:results_comparison_considering_delays} shows a comparison of \algosim{}-$10\%$ and \algo{} solutions in terms of these metrics. We can readily see that \algosim-$10\%$ outperforms \algo{} by $10\%, 17\%, 77\%$ in terms of the three metrics, respectively.

\begin{table}[!h]
    \centering
    \begin{tabular}{ccc}
    \toprule
    \algo{} & \algosim{}-$10\%$ &  Improvement $(\%)$\\
    \midrule
    \multicolumn{3}{l}{\textbf{Average evacuation time (hours)}}\\
    \midrule
    9.45 & 8.47 & 10.37\\
    \midrule
    % \midrule
    \multicolumn{3}{l}{\textbf{Evacuation completion time (hours)}}\\
    \midrule
    24.29 & 20.15 & 17.04\\
    \midrule
    % \midrule
    \multicolumn{3}{l}{\textbf{Average time spent on the road (hours)}}\\
    \midrule
    8.01 & 1.85 & 76.88\\
    \bottomrule
    \end{tabular}
    \caption{\algosim{}-$10\%$ outperforms \algo{} in terms of the average evacuation time, evacuation completion time, and average time spent on the road.}
    \label{tab:results_comparison_considering_delays}
\end{table}

\section{Proof of Hardness}
\label{app:hard}
In this section, we show that the problems we consider are not only {\sf NP-hard} but also hard to approximate. Even when we consider special planar graphs that perhaps is closer to a city's road network where $G$ is a subgraph of a grid and all destinations are along the border, these problems remain {\sf NP-hard}.
A summary of the hardness results is found in Table \ref{tbl:hard_appendix}. 

\begin{table}
\begin{center}
% \begin{tabular}{|c|c|c|c|c|}
\begin{tabular}{|p{1.5cm}|p{1.35cm}|p{1.2cm}|p{1.35cm}|p{1.2cm}|}
    \hline
    Underlying & Hardness & \multicolumn{3}{c|}{Problems} \\ 
    \cline{3-5}
    Graph&  & \avgcostproblem & \completiontimeproblem & \partialavgcostproblem \\ 
    \hline
    Subgrid / Planar& {\sf NP-hard} &  Thm. \ref{thm:gridhard}& Thm. \ref{thm:gridhard} & Thm. \ref{thm:gridhard} \\ 
    \hline
    General with Two Sources / Sinks & {\sf $(3/2-\epsilon)$-hard} to approx. & Thm. \ref{hardconst} & See \cite{Golin2017NonapproximabilityAP}  & Thm. \ref{hardconst} \\ 
    \hline
    General & {\sf $O(\log n)$-hard} to approx. & Thm. \ref{hardlog} & See \cite{Golin2017NonapproximabilityAP}  & Thm. \ref{hardlog} \\ 
    \hline
\end{tabular}
\caption{Summary of Hardness}
\label{tbl:hard_appendix}
\end{center}
% \vspace{-5mm}
\end{table}

\iffalse

\begin{theorem}
Problems \avgcostproblem, \completiontimeproblem{} and \partialavgcostproblem{} are {\sf NP-hard} to solve. Furthermore, it is {\sf NP-hard} to approximate these problems to within a factor of $O(\log n)$ of the optimal solution.
\end{theorem}

\fi
%Realistically, the graph obtained from city road networks are usually closer to planar graphs and the evacuation destinations are typically along the outer boundary. However, even in this special case where the graph is a subgraph of a grid (hence planar), with destinations along the outer border,the problems remain NP-hard. 

\begin{theorem}
\label{thm:gridhard}
Problems \avgcostproblem{}, \completiontimeproblem{} and \partialavgcostproblem{} are {\sf NP-hard} even If $G$ is a subgraph of a grid and all safety destinations are along the outer boundary. 
\end{theorem}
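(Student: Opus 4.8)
\noindent\emph{Proof plan.} I would reduce from \textsc{3-Partition}, which is strongly {\sf NP}-hard, so its instances (and hence the time-expanded graph we build) stay polynomial-size even with values written in unary. Given items $a_1,\dots,a_{3m}$ with $\sum_i a_i = mB$ and each $a_i\in(B/4,B/2)$, construct an evacuation network whose graph is a subgraph of a grid: it has $m$ vertex-disjoint ``corridors'', each a monotone path ending at a distinct safe node placed on the top boundary of the grid; each corridor starts with a ``throat'' edge of capacity $1$, and all remaining edges have capacity large enough to be non-binding. Item $a_i$ becomes a source $s_i$ with $d_{s_i}=a_i$ evacuees on the bottom boundary, and the unique outgoing edge of $s_i$ (constraint (\ref{constr:single_outgoing_edge})) also has capacity $1$; so the only binding capacities are the edges leaving the sources and the corridor throats. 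Between the sources and the corridors sits a routing gadget through which the (necessarily unsplit) flow out of each $s_i$ can reach any one corridor.

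The routing gadget should be a grid-embeddable rearrangeable routing network: giving each corridor $3$ input ports turns it into a network with $3m$ inputs and $3m$ outputs, and rearrangeability guarantees that \emph{every} assignment sending exactly $3$ sources to each corridor can be realized by internally vertex-disjoint paths, all of the same length $L$ (the network being layered and monotone). Crucially, since the theorem only asks that $G$ be \emph{some} subgraph of a grid, we keep exactly the edges of this layout and delete every other grid edge; then a feasible evacuation is nothing but (i) an assignment of sources to corridors, (ii) pairwise vertex-disjoint gadget paths realizing it, and (iii) a departure schedule. Disjointness of the chosen paths is in fact forced: if two sources' flows shared a transit vertex, the ``at most one outgoing edge'' rule (\ref{constr:at_most_one_outgoing_edge}) together with the confluence requirement would route them to the same corridor. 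Thus confluence is not the source of hardness here --- the capacity-$1$ throats are.

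Since no single dynamic flow may wait and every source-to-safe-node route has the uniform length $K:=L+1+M$ (gadget $+$ throat $+$ corridor of length $M$), an evacuee's evacuation time is its departure time plus $K$; and in a corridor carrying $n_c$ evacuees the capacity-$1$ throat forces the $n_c$ throat-entry times to be distinct, so the minimum sum of the corresponding departure times is $0+1+\cdots+(n_c-1)=\binom{n_c}{2}$. Taking $T_{max}$ generous enough that every assignment is feasible, the minimum total evacuation time equals $K\,mB+\min\sum_c\binom{n_c}{2}$ subject to $\sum_c n_c=mB$; by (strict) convexity of $x\mapsto\binom{x}{2}$ this is attained exactly when all $n_c=B$, i.e.\ exactly for ``yes'' instances of \textsc{3-Partition} (where, given $a_i\in(B/4,B/2)$, each group has exactly three items). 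This yields {\sf NP}-hardness of \avgcostproblem{} with threshold $\big(K\,mB+m\binom{B}{2}\big)/(mB)$; \partialavgcostproblem{} follows at $p=1$ (which specializes it to \avgcostproblem{}), or for a fixed $p<1$ by padding with cheap dummy evacuees; and \completiontimeproblem{} follows since, by the same counting, the minimum completion time is $B-1+K$ iff the partition is balanced and strictly larger otherwise. (One could alternatively take $T_{max}$ \emph{tight}, so that mere feasibility of an evacuation schedule within $T_{max}$ already encodes \textsc{3-Partition}.)

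I expect the main obstacle to be the planar construction itself: exhibiting a polynomial-size grid subgraph that realizes the routing gadget so that every $3$-per-corridor assignment is routable by internally disjoint, \emph{equal-length} paths, that admits no unintended path to a safe node and no detour long enough to let a schedule sidestep the throat bottleneck within $T_{max}$, and that keeps all safe nodes on the outer boundary. Standard grid-layout constructions for rearrangeable networks (or a direct channel-routing argument) should supply the network; the delicate part is the length-equalization and horizon bookkeeping needed to make the convexity/optimality argument for \avgcostproblem{} and \partialavgcostproblem{} fully rigorous.
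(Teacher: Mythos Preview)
Your approach is genuinely different from the paper's: you reduce from \textsc{3-Partition} and try to \emph{build} a grid subgraph with a universal routing gadget, whereas the paper reduces from the $k$-Node-Disjoint Paths problem ($k$NDP), which is already known to be hard on grid subgraphs. The paper then only has to attach demands and capacities to the given grid instance (edges incident to $s_i$ or $t_i$ get capacity $i$, source $s_i$ gets $M\cdot i$ evacuees) and compare a YES instance, where the disjoint paths carry full flow, to a NO instance, where a mismatched source--sink pair forces a strict capacity loss across the cut of sink-incident edges.

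The gap in your plan is precisely the ``main obstacle'' you flag, and I do not think it is a matter of bookkeeping---it appears to be a genuine obstruction. Your gadget must be a planar graph (any subgraph of a grid is planar) in which \emph{every} $3$-per-corridor assignment is realizable by paths that are vertex-disjoint across corridors; disjointness is, as you note, forced by confluence whenever two sources head to different corridors. But the $m$ corridor exits leave the gadget toward $m$ safe nodes on the outer boundary, so they acquire a fixed cyclic order in the plane. Vertex-disjoint paths in a planar graph cannot cross (Jordan curve), so once the corridor exits are ordered, the realizable assignments are exactly the ``non-crossing'' ones relative to that order. Since you do not know the \textsc{3-Partition} solution when you construct the instance, and a YES solution may pair sources with corridors in a pattern that crosses for any fixed layout, no single planar gadget can host all the routings you need. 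Rearrangeable permutation networks (Bene\v{s}, Clos, sorting networks) are inherently non-planar for this reason; the ``standard grid-layout constructions'' you allude to use multiple metal layers or knock-knees, which are unavailable when the graph must literally be a subgraph of the 2D grid.

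This is exactly why the paper's route via $k$NDP is the right one here: by starting from a problem that is already hard on grid subgraphs, the planarity comes for free, and one only needs a cut argument rather than a universal router. If you want to salvage a numerical reduction in this spirit, you would need a \emph{planar} source problem (e.g., a planar variant where the combinatorics of the solution is compatible with a fixed non-crossing layout), not \textsc{3-Partition}.
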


%% Madhav: In the paper expand on this (atleast for the longer version) See if we can talk about reesults for near planar instances as well given that such graphs might be near planar. 
%% What do we know about trees BTW?
% Da Qi: For trees, the routes are set so it should be easy to find the schedule. 

\begin{theorem}
\label{hardconst}
For any $\epsilon > 0$, it is {\sf NP-hard} to approximate \avgcostproblem{} and \partialavgcostproblem{} to a factor of ($3/2-\epsilon$) of the optimum, even when there are only two sources and one safe node. 
\end{theorem}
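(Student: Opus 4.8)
The plan is to give a gap-preserving reduction to an evacuation network with exactly two sources $s_1,s_2$ and a single safe node $z$, so that ``yes'' instances of a suitable {\sf NP-hard} problem admit an evacuation schedule whose average evacuation time is at most some value $A$, while ``no'' instances force every schedule to have average at least $(3/2-\epsilon)A$. The most economical route is to adapt the construction of Golin \emph{et al.}~\cite{golin2017nonapproximability}, which already yields a two-source, one-sink instance for \completiontimeproblem{} with the property that a ``yes'' certificate lets all flow reach $z$ by some horizon $L$ along two large-capacity ``fast pipes,'' whereas a ``no'' certificate forces some flow through a low-capacity bottleneck edge $e^{*}$ and hence up to time roughly $(3/2-\epsilon)L$; the obstruction there is precisely that the confluence constraint (at most one outgoing edge at each transit node, constraints~(\ref{constr:single_outgoing_edge})--(\ref{constr:at_most_one_outgoing_edge})) prevents the two source streams from simultaneously using the fast pipes. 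Alternatively one can build a self-contained gadget by reducing from the {\sf NP-hard} directed two-disjoint-paths (or integral two-commodity feasibility) problem; the skeleton of the argument is the same.

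To upgrade completion-time hardness to average-time hardness, the key modification is to attach to $e^{*}$ a block of $\Theta(|W|)$ additional evacuees, split between $s_1$ and $s_2$, whose only routes to $z$ traverse $e^{*}$, and to calibrate the edge capacities, travel times and populations so that in a ``no'' instance a constant fraction $\rho$ of all evacuees is forced through $e^{*}$ and therefore spread (no waiting is allowed, so the delay is realized by staggered injection at the sources) over a time window of length $\beta L$ with $\rho\beta=1$. The ``no'' average is then $\big(1-\rho+\rho(1+\beta/2)\big)L = (1+\rho\beta/2)L = \tfrac32 L$, up to lower-order terms, while the fast pipes are given capacity large enough that in a ``yes'' instance the injection spread is $o(L)$ and the average is $(1+o(1))L =: A$.

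For completeness I take a ``yes'' certificate, set the binary variables $x_e$ to select the two fast pipes plus the (now uncongested) route of the padding block through $e^{*}$, and inject $W(s_1)$, $W(s_2)$ and the padding block at full rate; constraints~(\ref{constr:single_outgoing_edge})--(\ref{constr:edge_assignment_binary}) hold because each used transit node has a unique successor in the resulting in-tree rooted at $z$ and the two main streams are edge-disjoint, so no capacity is violated, and a direct count gives average $A$. For soundness I argue that on a ``no'' instance every confluent, wait-free schedule uses an in-tree rooted at $z$ in which the disjoint-pipe configuration is unavailable, so one of the two source streams (together with the padding block) must be routed across $e^{*}$; a volume of at least $\rho|W|$ through capacity $c_{e^{*}}$ cannot be drained before time $\approx(1+\beta)L$, which drives the average up to $(3/2-\epsilon)L$. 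The extension to \partialavgcostproblem{} is immediate for $p=1$, since then $W_p=W$; for $p$ bounded away from $0$ one appends a further block of dummy evacuees on a private length-one path comprising exactly a $(1-p)$-fraction of the enlarged population, so the outlier rule always discards precisely this harmless block and the $3/2$ gap survives on $W_p$.

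The main obstacle is the soundness case analysis: one must engineer the two-source gadget tightly enough that the confluence constraint genuinely leaves the adversary no confluent in-tree that simultaneously keeps both source streams off $e^{*}$ and keeps the padding block off $e^{*}$ whenever the underlying instance is ``no'' — and then control the arrival-time profile of the throttled flow precisely enough to land on the constant $3/2$ rather than a weaker $1+\delta$. Once that is in place, feasibility checking, driving the additive losses below $\epsilon$ by scaling up populations and travel times, and the $p<1$ reduction are all routine.
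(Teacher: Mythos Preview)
Your high-level plan---a gap reduction from a two-disjoint-paths--type problem to a two-source, one-sink confluent instance---matches the paper's, but the specific mechanism you propose is both more complicated than needed and, as written, inconsistent with the confluence constraint. You place the padding block at $s_1$ and $s_2$ and simultaneously want the ``main'' streams from $s_1,s_2$ to use separate fast pipes avoiding $e^{*}$; but confluence (constraints~(\ref{constr:single_outgoing_edge})--(\ref{constr:at_most_one_outgoing_edge})) forces \emph{all} evacuees at a given source onto a single outgoing path, so you cannot send $s_i$'s original demand along a fast pipe while routing $s_i$'s padding through $e^{*}$. Your completeness paragraph in fact selects three routes (``two fast pipes plus the route of the padding block through $e^{*}$''), which is one more than two sources permit in a confluent in-tree. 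This is exactly the tension you flag as the ``main obstacle,'' and it is not merely a soundness-analysis issue---it already bites in the YES case.

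The paper sidesteps this by dropping the fast-pipe/padding layer and letting the gap come from throughput rather than travel time. It reduces from the capacitated Two Distinct Path Problem: attach $s_i$ to $x_i$ and $y_i$ to the unique safe node with edges of capacity $i$ ($i=1,2$), give every label-$i$ edge capacity $i$ and unit travel time, and set $d_{s_i}=iM$. In a YES instance the two disjoint paths carry $1+2=3$ evacuees per step, so all $3M$ evacuees depart by time $M$ and the average arrival is roughly $M/2$. In a NO instance, any confluent routing either merges the two paths before the sink (yielding a single-edge cut of capacity $\le 2$) or keeps them node-disjoint, in which case $P_2$ must use some label-$1$ edge; that edge together with $s_1x_1$ is again a capacity-$2$ cut. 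Either way at most $2$ evacuees cross per step, forcing total arrival time $\ge\sum_{k=1}^{3M/2}2k\approx 9M^2/4$ and average $\ge 3M/4$, a $3/2-\epsilon$ gap for large $M$. For \partialavgcostproblem{} the paper simply notes that $p=1$ recovers \avgcostproblem{}, so your dummy-block extension for $p<1$ is unnecessary. Your ``alternative'' of reducing directly from two-disjoint-paths is thus the right move; discard the fast pipes and the padding, and the cut/throughput arithmetic gives the $3/2$ constant immediately.
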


% \begin{theorem}
% \label{hardlog}
% For \avgcostproblem{} and \partialavgcostproblem{} with many sources and one safe node, it is {\sf NP-hard} to approximate within a factor of $O(\log n)$. 
% \end{theorem}

To prove Theorem \ref{thm:gridhard}, we rely on the general $k$-Node-Disjoint Path Problem:

\begin{problem} [$k$-Node-Disjoint Path Problem (kNDP)]
Given a graph $G$, a set of $k$ source-sink pairs $s_i, t_i$, find node-disjoint paths from each source $s_i$ to sink $t_i$. 
\end{problem}

The above problem was proven to be hard even when $G$ is a subgraph of a grid \cite{chuzhoy2017new}. 

\begin{theorem}
The $k$-Node-Disjoint Path Problem is hard to approximate to a factor of $2^{\Omega(\sqrt{\log n})}$ even when the graph is a subgraph of a grid and all sources lie on the outer boundary. 
\end{theorem}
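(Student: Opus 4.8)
The plan is to follow the now-standard recipe for proving inapproximability of disjoint paths on grids (this statement is exactly the main grid-routing theorem of \cite{chuzhoy2017new}): (i) start from a rigid constant-gap instance, (ii) realize it as a node-disjoint-paths instance drawn inside a polynomially large grid with every source on the outer face, and (iii) recursively compose the construction with itself to amplify the gap to $2^{\Omega(\sqrt{\log n})}$. For step (i) I would take as the base object a bounded-occurrence constraint satisfaction problem (e.g.\ bounded-degree $3$-SAT, which is APX-hard), or more directly a coarse hardness of NDP on general graphs, and massage it into an instance with a ``consistency backbone'': a modest collection of source--sink pairs whose joint routability encodes satisfiability, and in which routing even a constant fraction of the pairs already pins down a globally consistent assignment. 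This yields only a constant gap, which is all step (i) needs.

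The heart of the argument is step (ii), a gadget layout. Each variable (and each clause) of the base instance becomes a small sub-grid ``box''; routing a terminal pair straight through a box versus bending it around the box encodes a Boolean value, and node-disjointness along the grid rows and columns shared by adjacent boxes propagates these values and forces consistency. One then places the boxes in the interior of a large ambient grid and connects each source to its box through a dedicated ``corridor'' running in from the outer boundary and vertex-disjoint from all box interiors; since the demand graph can be arranged as a spider/caterpillar, all these corridors can be routed without crossings while keeping every source literally on the boundary. The resulting figure is a subgraph of a grid of size polynomial in the base instance, and completeness (a satisfying assignment $\Rightarrow$ all $k$ pairs routed disjointly) and soundness (an unsatisfiable instance $\Rightarrow$ only a constant fraction routable) are inherited from step (i).

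A single application of step (ii) gives only a constant gap, so for step (iii) I would iterate, composing the grid instance with itself so that each level multiplies the gap by a fixed constant while increasing the size polynomially. After $r$ levels the size is roughly $n_0^{2^{O(r)}}$ and the gap is $2^{\Theta(r)}$; taking $r = \Theta(\sqrt{\log N})$, where $N$ is the final number of vertices, gives a gap of $2^{\Omega(\sqrt{\log N})}$ with $N$ still polynomial in the base size $n_0$. One must also verify that the ``all sources on the outer boundary'' property survives composition, i.e.\ that the perimeter of the composed grid can host all new sources, which is again arranged by routing the new corridors along the boundary.

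The hard part is steps (ii) and (iii) together: simultaneously keeping the whole object an honest subgraph of a grid, forcing \emph{every} source onto the outer face, and ensuring each recursion level incurs only a \emph{polynomial} blowup in size -- a super-polynomial blowup would destroy the $\sqrt{\log n}$ exponent. The real technical crux is designing the boxes and corridors so that the number of disjoint paths forced to cross any given grid line (the congestion across a cut) is tightly controlled at every level; once that is in hand, analyzing completeness/soundness of a single gadget and counting routed pairs are comparatively routine.
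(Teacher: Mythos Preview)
The paper does not give its own proof of this theorem: it is quoted verbatim as a known result from \cite{chuzhoy2017new} and used only as a black box in the reduction that establishes Theorem~\ref{thm:gridhard}. In fact, the appendix proof of Theorem~\ref{thm:gridhard} only consumes the \emph{NP-hardness} of $k$NDP on subgrids (it reduces a YES/NO instance of $k$NDP to a gap instance of \avgcostproblem{}), so the full $2^{\Omega(\sqrt{\log n})}$ inapproximability factor is not actually exercised anywhere in the paper.

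Your outline is therefore not comparable to anything in the paper; you are sketching the Chuzhoy--Kim--Nimavat argument itself. As a sketch it points in the right direction (a hard base CSP, a planar grid layout with terminals on the boundary, and an amplification step that trades instance size for gap), but it is still only a proof \emph{plan}, and a couple of points would need tightening before it could stand on its own. First, the actual reduction in \cite{chuzhoy2017new} starts from $3$COL$(5)$ rather than bounded-degree $3$-SAT, and the amplification is not literally ``compose the grid instance with itself'': the levels are built so that a level-$i$ instance embeds many scaled copies of a level-$(i{-}1)$ instance inside a carefully chosen host grid, with a delicate accounting of how many pairs survive routing; your ``$n_0^{2^{O(r)}}$ size, $2^{\Theta(r)}$ gap'' arithmetic is the right shape but hides exactly the congestion/size bookkeeping that makes the exponent come out as $\sqrt{\log n}$ rather than something weaker. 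Second, the step you flag as ``the real technical crux'' (controlling congestion across every grid line at every level while keeping all sources on the outer face) is indeed where essentially all of the work in \cite{chuzhoy2017new} lives, and nothing in your outline yet indicates how you would carry it out. For the purposes of this paper, simply citing \cite{chuzhoy2017new} is the appropriate move.
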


We point out that \completiontimeproblem{} is equivalent to the Confluent Quickest Flow Problem in \cite{Golin2017NonapproximabilityAP}, which the authors have shown the above inapproximability result. In addition, they have also provided a bicriteria hardness result where it is not possible to approximate the completion time within some constant factor even when satisfying only some constant fraction of the demand. For \partialavgcostproblem{}, note that it is a generalization of Problem \avgcostproblem{} where $p = 1$. Therefore, any hardness result for \avgcostproblem{} also holds for \partialavgcostproblem. Thus, we focus on the following two theorems:

The approximation hardness proof of \avgcostproblem{} is similar to the one in \cite{Golin2017NonapproximabilityAP}. The main difference is in its analysis since the objective in the two problems differ. For brevity and completeness, we outline the reductions but omit certain proofs and ask the readers to refer to \cite{Golin2017NonapproximabilityAP} for more details.

The approximation hardness result relies on the NP-hardness of the capacitated version Two Distinct Path Problem. 

\begin{problem}[Two Distinct Path Problem]
Let $G$ be a graph with two sources $x_1, x_2$ and two sinks $y_1, y_2$. Every edge is labelled with either $1$ or $2$. Determine if there exists two edge-disjoint paths $P_1, P_2$ such that $P_i$ is a path from $x_i$ to $y_i$ for $i=1, 2$ and $P_2$ only uses edges with label $2$ ($P_1$ can use any edge). 
\end{problem}

The above problem is known to be {\sf NP-hard} \cite{guruswami2003near}. Other variations of the problem such as uncapacitated, undirected/directed, edge/node-disjoint paths are also known to be hard (see e.g. \cite{fortune1980directed}, \cite{naves2010maximum} and \cite{robertson1995graph}). 

\begin{proof}[Proof of Theorem \ref{hardconst}]
Given an instance $\mathcal{I}$ of the Two Disjoint Path Problem, consider constructing the following graph $G$ where we attach safe node $t$ to $y_1, y_2$ with an edge of capacity $1, 2$ respectively. For $i=1, 2$, we also add a source $s_i$
%Madhav: the above x_i should be s_i?
and attach it to $x_i$ with an edge of capacity $i$. Every edge with label $i$ also has capacity $i$. Source $s_i$ has $M*i$ evacuees for some large $M$ and each edge has a travel time of $1$. The upperbound completion time is set to be $M^2n$. 

If there exists two disjoint paths in $\mathcal{I}$, then a valid schedule simply sends $i$ evacuees at every time step, where the last group of people leaves their sources at time $M$. Since each path has length at most $n=|V(G)|$, the $i$ evacuee that left source $s_i$ at time $k$ is guaranteed to arrive by time $k+n$. Then, the total evacuation time is at most $\sum_{k=1}^M3(k+n) = 3M^2/2+3M/2 +3Mn$, resulting in an average evacuation time of roughly $M/2$. 

If $\mathcal{I}$ does not have two disjoint paths, then consider the two paths $P_1, P_2$ in a solution to \avgcostproblem{} in $G$. If $P_1, P_2$ intersects before $t$, since it is a confluent flow, the edge following their node of intersection is a single-edge cut that separates the sources from the sink. If the two paths only intersects at $t$, since we are in a NO-instance of $\mathcal{I}$, $P_2$ must have used an edge of capacity $1$. Then, deleting that edge along with $s_1x_1$ also separates the sources from the sink. Note that in both cases, the cut has capacity at most $2$. Then, at every time step, at most $2$ evacuees can cross the cut. Thus, for all $3M$ evacuees to cross the cut, it takes at least $3M/2$ time steps. Due to this bottleneck, it follows that the smallest total evacuation time is at least $\sum_{k=1}^{3M/2} 2(k+n) \ge 9M^2/4$, giving an average of at least $3M/4$. 

Since the two instances has a gap of $3/2-\epsilon$ where $\epsilon$ depends on the choice of $M$, our result follows. 
\end{proof}

The proof of Theorem \ref{hardlog} follows a similar structure. We use the same setup as the proof of  {\sf $\log$-hardness} 
of Quickest Flow Time in \cite{adiga13transportation} (Theorem 7) with an arbitrarily large upperbound on completion time. Refer to \cite{adiga13transportation} for more details. Note that the resulting graph contains $N$ sources, one safe node and a total of $M^2\log n$ evacuees. We can similarly show that a YES-instance of the Two-Disjoint Path problem leads to an average evacuation time of at most $M^2/2$. Meanwhile, if it is a NO-instance, then by Lemma 23 in \cite{adiga13transportation}, there is cut of capacity at most 2, creating a bottleneck. Using similar analysis as before, one can show that the average evacuation time is at least $M^2\log N/4$. Then, theorem~\ref{hardlog} follows immediately. 

\begin{proof}[Proof of Theorem~\ref{thm:gridhard}]
The proofs for all three problems are similar and hence we only focus on \avgcostproblem{} here. Given an instance of $k$NDP where $G$ is a subgraph of a grid and $k=O(n)$, we first swap the location of the sources and sinks such that all sinks lie on the outer boundary. One can easily check that subdividing any edge and adding a leaf to any vertex of degree less than $4$ still ensures that $G$ remains a subgraph of a grid. Then, we claim that without loss of generaltiy, we can assume that all sources and sinks are degree-$1$ vertices. This can be accomplished by first subdivide every edge and shift the sources/sinks to an edjacent newly added vertex. Then, add a single edge to it and shift the source/sink to the new pendant vertex. This ensures that every source and sink is incident to only one edge. For any edge incident to a source $s_i$ or a sink $t_i$, assign it a capacity of $i$; every other edge has a capacity of $k$. Each source $s_i$ has demand $M*i$ where $M \ge n^3$. This means in total, there are $M*k^2/2$ evacuees. 

In a YES-instance, every source follows its designated path. At every timestep, each source $s_i$ sends $i$ people, ensuring a total of $k^2/2$ people leaves the sources every timestep. Since each disjoint path has length at most $n$, anyone leaving at time $t$ is guaranteed to arrive by time $t+n$. This implies that the last group, leaving at time $M$ also arrives by time $M+n$, achieving an average arrival time of at most $\frac{1}{M*k^2/2} (k^2/2)*\sum_{t=1}^M (t+n) \le M/4 + n $. 

In a NO-instance, in a solution to \avgcostproblem, consider the cut formed by the edges incident to the sinks. We claim that the amount of flow through this cut at any point in time is at most $k^2/2 -1$. Assume for the sake of contradiction that there exists some point in time where the flow across the cut is at least $k^2/2$. Since the total capacity of the cut is $k^2/2$, then every edge is used at full capacity at that point in time; in particular, every sink is used in the final routing. Since the routes are confluent, every source is matched to a distinct route. Since we are in a NO-instance, there exists $i, j$ such that source $s_i$ is routed to sink $t_j$ and $i<j$. Since source $s_i$ can send at most $i$ flow at any point in time, sink $t_j$ can never receive its full capacity of $j$, a contradiction. 

Then, at any point in time, at most $k^2-1$ people can arrive at the destinations. This implies we need at least $T=M*k^2/(2(k^2-1))$ rounds and thus a total evacuation time of at least $(T^2/2)(k^2-1)$. This implies an average evacuation time of at least $Mk^2/(4(k^2-1)) = M/4 + M/(4(k^2-1))$. By our choice of $M$, $M/(4(k^2-1)) \ge n$, causing a gap between the YES and NO instances, proving our theorem. 
\end{proof}

\end{document}